\documentclass[twoside]{article}
\usepackage[accepted]{aistats2eA}

\usepackage{amsfonts,mlapa,plain}
\usepackage{multirow}
\usepackage{latexsym}
\usepackage{epsfig}
\usepackage{bm}
\usepackage{times}
\usepackage{color}
\usepackage{algorithm, algorithmic}

\usepackage{amsmath, amsthm, amssymb}
\usepackage{graphicx}
\usepackage{enumitem}
\usepackage{thmtools}

\usepackage{graphs}

\long\def\DELETE#1{}

  \newtheorem{theorem}{Theorem}
  \newtheorem{lemma}[theorem]{Lemma}

	%no number versions for supplement
	\newtheorem*{theorem*}{Theorem}
  \newtheorem*{lemma*}{Lemma}

  \theoremstyle{definition}

\declaretheoremstyle[notefont=\bfseries,notebraces={}{},%
    headpunct={},postheadspace=1em]{mystyle}
 %hand numbered thm
 %hand numbered lem

\def\L{\mathcal{L}}

\newcommand{\F}{{\cal F}}

\newcommand{\tn}{\textnormal}

\newcommand{\eps}{\epsilon}

\newcommand{\V}{{\cal V}}
\newcommand{\al}{\alpha_{ij}}
\newcommand{\N}{\textsf N}
\newcommand{\h}{\theta_i}
\newcommand{\W}{W_{ij}}
\newcommand{\x}{\xi_{ij}}
\newcommand{\s}{q_i}
\newcommand{\ce}{\mathcal{E}}
\newcommand{\A}{\cite{A} }
\newcommand{\M}{\mathcal{M}}
\newcommand{\m}{\gamma}

\newcommand{\bit}{\begin{itemize}}
\newcommand{\eit}{\end{itemize}}

\DeclareMathOperator*{\argmin}{arg\,min}

\pagestyle{myheadings}

\begin{document}
\runningauthor{Weller, Jebara}
\twocolumn[
\aistatstitle{Approximating the Bethe partition function} 
\aistatsauthor{Adrian Weller \And Tony Jebara}

\aistatsaddress{ Columbia University, New York NY 10027 \\ \texttt{adrian@cs.columbia.edu}
 \And Columbia University,  New York NY 10027 \\ \texttt{jebara@cs.columbia.edu}} 
]

\begin{abstract}
When belief propagation (BP) converges, it does so to a stationary point of the Bethe free energy $\F$, and is often strikingly accurate. However, it may converge only to a local optimum or may not converge at all. An algorithm was recently introduced for attractive binary pairwise MRFs which is guaranteed to return an $\eps$-approximation to the global minimum of $\F$ in polynomial time provided the maximum degree $\Delta=O(\log n)$, where $n$ is the number of variables. Here we significantly improve this algorithm and derive several results including a new approach based on analyzing first derivatives of $\F$, which leads to performance that is typically far superior and yields a fully polynomial-time approximation scheme (FPTAS) for attractive models without any degree restriction. Further, the method applies to general (non-attractive) models, though with no polynomial time guarantee in this case, leading to the important result that approximating $\log$ of the Bethe partition function, $\log Z_B=-\min \F$, for a general model to additive $\epsilon$-accuracy may be reduced to a discrete MAP inference problem.  We explore an application to predicting equipment failure on an urban power network and demonstrate that the Bethe approximation can perform well even when BP fails to converge.

\end{abstract}

\section{INTRODUCTION}

Undirected graphical models, also termed Markov random fields (MRFs), are flexible tools used in many areas including speech recognition, systems biology and computer vision. 
A set of variables and a score function is specified such that the probability of a configuration of variables is proportional to the value of the score function, which typically factorizes into sub-functions over subsets of variables in a way that defines a topology on the variables. %This structure sometimes allows efficient computation of desired properties. 

\smallskip
Three central problems are:
\begin{enumerate}
\vspace{-.2cm} \item To evaluate the partition function $Z$, which is the sum of the score function over all possible settings, and hence is the normalization constant for the probability distribution.  
\vspace{-.2cm} \item Marginal inference, which is computing the probability distribution of a given subset of variables.  
\vspace{-.2cm} \item Maximum a posteriori (MAP) inference, which is the task of identifying a setting of all the variables which has maximum probability. 
\end{enumerate} 

\vspace{-.2cm}The first two problems are related (marginals are a ratio of two partition functions). Computing $Z$ belongs to the class of counting problems \#P \cite{Val79}. Further, exact marginal inference is NP-hard \cite{Cooper90}. The MAP problem is typically easier, yet is still NP-hard \cite{Shi94}, even 
to approximate \cite{AbdHed98}. Much work has focused on trying to find good approximate solutions, or restricted domains where exact solutions may be found efficiently. One popular method is to use a message-passing algorithm called belief propagation \cite{Pearl}, which returns an exact solution in linear time in $n$,  the number of variables, if the topology of the model is a tree. If this method is applied to general topologies, termed loopy belief propagation (LBP), results are sometimes strikingly good \cite{turbo98,Mur99}, though in general it may not converge at all, and if it does, it may not be to a global optimum.

\cite{YedFreWei01} showed a remarkable connection between LBP and an earlier approach from statistical physics \cite{Bethe35,Peierls36}, in that any fixed point of LBP corresponds to a stationary point of a function of the system, termed the Bethe free energy $\F$. In fact, LBP can be seen as an iteration of the fixed point equations of the Bethe free energy. Variational approaches led to a better understanding of this relationship, showing that the negative of the global minimum of the Bethe free energy is the $\log$ of the Bethe partition function $Z_B$. Thus, $Z_B$ should yield a good approximation to the true partition function $Z$, though this is not a formal result - there are cases where it performs poorly, typically when there are many short cycles with strong edge interactions \cite[\S~4.1]{WaiJor08}. Even then, however, it can still be remarkably effective and in practice, LBP is widely used, often with excellent results. One motivation for our algorithm is to allow exploration of the limits for when $Z_B$ performs well, even when LBP or other local optimization approaches fail, which has not previously been possible. We demonstrate this application in Experiments \S\ref{sec:exp}.

Another interesting example is the demonstration \cite{Chand11} that the Bethe
approximation is very useful to count independent sets of a graph. Further, it was shown that if the shortest cycle cover conjecture of Alon and Tarsi \cite{AloTar85} is true, then the Bethe approximation is very good indeed for a random 3-regular graph.

Extensive analysis has focused on understanding conditions under which LBP is guaranteed to converge to the global optimum \cite{Hes04,MK07,Wat11}, but outside these restricted settings, until recently, there were no polynomial time methods even to approximate $Z_B$.   
One major area of study is the important subclass of models which are \emph{binary}, i.e. each variable takes one of just two possible values, and \emph{pairwise}, i.e. all score sub-functions are evaluated over at most two variables. These play a key role in areas such as computer vision, both directly and as critical subroutines in solving more complex problems \cite{PleKoh12}. Further, it is possible to convert a general MRF into an equivalent binary pairwise model \cite{YedFreWei01}, though potentially with a much enlarged state space.

An algorithm was introduced in \cite{Shin12} guaranteed to return an approximately stationary point of $\F$ in polynomial time for such binary pairwise models, though with a bound on the maximum degree, $\Delta~=~O(\log n)$. \cite{A} then used a discretizing approach to derive a polynomial-time approximation scheme (PTAS) for $\log Z_B$ for the significant subclass of \emph{attractive}\footnote{An \emph{attractive} model has all pairwise relationships of the type that tend to pull adjacent variables toward the same value (see \S \ref{sec:prelims} for a more precise definition). Equivalent terms used are \emph{associative}, \emph{regular} or \emph{ferromagnetic}.} binary pairwise models, also with $\Delta~=~O(\log n)$.  Interestingly, \cite{Ruo12} recently proved that $Z_B \leq Z$ for attractive models. Similarly, for graphical models whose partition function is the permanent of a non-negative matrix, $Z_B$ is recoverable via convex optimization and, here too, $Z_B \leq Z$ \cite{HuaJeb09,vontobel2010bethe,watanabe2010belief,Gur11}. Otherwise, beyond trivial cases where the graph is acyclic, efficiently computing or approximating $Z_B$ remains an active research topic.

\subsection{Contribution and Summary}

We obtain important new results for binary pairwise MRFs as described in the Abstract. We adopt ideas from \cite{A} but go significantly further to derive much stronger results. The overall approach is to construct a \emph{sufficient mesh} of discretized points in such a way that the optimum mesh point $q^*$ is guaranteed to have $\F(q^*)$ within $\eps$ of the true optimum. The new, first derivative approach, generally results in a much coarser, yet still sufficient mesh, and also admits adaptive methods to focus points in regions where $\F$ may vary rapidly. Separately, we also refine the second derivative method of \cite{A} to derive a method that performs well for very small $\epsilon$. We then consider how best to solve the resulting discrete optimization problem, which may be framed as multi-label MAP inference, and for which many techniques are available, some of which are efficient for sub-classes of problem.\footnote{Computing $Z_B$ is at least PPAD or PLS-hard in general since it not only requires a fixed point but also the global minimizer \cite{Shin13,DasPap11}.} 

In \S \ref{sec:prelims}, we establish notation and present various preliminary results, then apply these in \S \ref{sec:1st} to present our new approach for mesh construction based on analyzing first derivatives of $\F$. This leads to much improved performance (often by orders of magnitude), immediately admits general (non-attractive) models, and in the attractive setting yields a FPTAS for models with no restriction on topology.

In \S \ref{sec:better2} we  revisit the second derivative approach of \cite{A}. We show how this method can be refined and extended to yield better performance and also to admit non-attractive models, though for most cases of interest, unless $\epsilon$ is very small, the method of \S \ref{sec:1st} will be superior.

In \S \ref{sec:MAP}, we discuss the derived discrete optimization problem, which may be viewed as a multi-label MAP inference problem. In certain settings the problem is tractable, and in general we mention several features that can make it easier to find a satisfactory solution, or at least to bound its value. Experiments are described in \S \ref{sec:exp} demonstrating practical application of the algorithm. Finally, we present conclusions in \S \ref{sec:conc}.

\subsubsection{Structure of the overall algorithm}\label{sec:overall}

\noindent Input: Parameters $\{\theta_i, W_{ij}\}$ for a general binary pairwise MRF (convert format using the reparameterization of \S \ref{sec:input} if required), and a desired accuracy $\epsilon$.
\begin{enumerate}%[nosep, label=\alph*), wide]%[(a)]
\setlength{\itemindent}{-1.0em}
\vspace{-.2cm} \item Preprocess by computing bounds $\{A_i,B_i\}$ on the locations of minima %using Theorem \ref{thm:qsand2} or any other method 
(see \S \ref{sec:prelimbounds}).
%\vspace{-.2cm} \item Given bounds $\{A_i,B_i\}$, compute $\{L_i, U_i\}$ parameters which improve bounds on partial derivatives of $\F$ (see \S *).
\vspace{-.2cm} \item Construct a sufficient mesh using one of the methods in this paper. Indeed, all approaches are fast, so several may be used, then the most efficient mesh selected.
\vspace{-.2cm} \item Attempt to solve the resulting multi-label MAP inference problem, see \S \ref{sec:MAP}.
\vspace{-.2cm} \item If unsuccessful, but a strongly persistent partial solution was obtained, then improved $\{A_i,B_i\}$ may be generated (see \S \ref{sec:pers}), repeat from 2.
\end{enumerate}
\vspace{-.2cm} At anytime, one may stop and compute bounds on $\F$, see \S \ref{sec:othercases}. 

\subsection{Related work}
Methods such as CCCP \cite{Yui02} or UPS \cite{TehWel02} are guaranteed to converge to a local minimum of the Bethe free energy, but this may be far from the global optimum.
In earlier work, a fully polynomial-time randomized approximation scheme (FPRAS) for the true partition function was derived \cite{JerSin93}, but only when singleton potentials are uniform (i.e. a uniform external field) and the resulting runtime is high at $O(\eps^{-2} m^3 n^{11} \log n)$.
It was recently shown \cite{HeiGlo11} that models exist such that the true marginal probability cannot possibly be the location of a minimum of the Bethe free energy.
Our work demonstrates an interesting connection between MAP inference techniques (NP-hard) and estimating the partition function $Z$ (\#P-hard). Recently \cite{HazJaa12} showed a different connection by using MAP inference on randomly perturbed models to approximate and bound $Z$.

\section{NOTATION \& PRELIMINARIES} \label{sec:prelims}

Our notation is similar to \A and \cite{WelTeh01}. We focus on a binary pairwise model with $n$ variables $X_1,\dots,X_n \in \mathbb{B}=\{0,1\}$ and graph topology $(\mathcal{V},\mathcal{E})$ with $m=|\ce|$; that is $\V$ contains nodes $\{1,\dots,n\}$ where $i$ corresponds to $X_i$, and $\ce \subseteq V \times V$ contains an edge for each pairwise score relationship. Let $\N(i)$ be the neighbors of $i$. Let $x=(x_1,\dots,x_n)$ be one particular configuration, and introduce the notion of \emph{energy} $E(x)$ through\footnote{The probability or score function can always be reparameterized in this way, with finite $\h$ and $W_{ij}$ terms provided $p(x)>0 \; \forall x$, which is a requirement for our approach. There are reasonable distributions where this does not hold, i.e. distributions where $\exists x: p(x)=0$, but this can often be handled by assigning such configurations a sufficiently small positive probability $\epsilon$.\label{fn:finite}}
\begin{equation}\label{eq:E}
p(x)=\frac{e^{-E(x)}}{Z}, \; E=-\sum_{i \in \mathcal{V}} \h x_i -\sum_{(i,j)\in \mathcal{E}} \W x_i x_j,
\end{equation}
where the partition function $Z=\sum_x e^{-E(x)}$ is the normalizing constant. 

Given any joint probability distribution $p(X_1,\ldots,X_n)$ over all variables, the %\emph{true} 
(Gibbs) free energy is defined as $\F_G(p)=\mathbb{E}_p(E)-S(p)$, where $S(p)$ is the (Shannon) entropy of the distribution. Using variational methods, a remarkable result is easily shown \cite{WaiJor08}: minimizing $\F_G$ over the set of all globally valid distributions (termed the \emph{marginal polytope}) yields a value of $-\log Z$, exactly at the true marginal distribution, given in \eqref{eq:E}. 

Minimizing $\F_G$ is, however, computationally intractable, hence the approach of minimizing the Bethe free energy $\F$ makes two approximations: (i) the marginal polytope is relaxed to the \emph{local polytope}, where we require only \emph{local} consistency, that is we deal with a \emph{pseudo-marginal} distribution $q$, which in our context may be considered $\{q_i=q(X_i=1) \; \forall i \in \V, \mu_{ij}=q(x_i,x_j) \; \forall (i,j)\in \ce\}$ subject to $q_i=\sum_j \mu_{ij} \; \forall i \in \V, j\in \N(i)$; and (ii) the entropy $S$ is approximated by the Bethe entropy $S_B=\sum_{(i,j)\in \ce} S_{ij} + \sum_{i \in \cal{V}} (1-d_i) S_i$, where $S_{ij}$ is the entropy of $\mu_{ij}$, $S_i$ is the entropy of the singleton distribution and $d_i=|\N(i)|$ is the degree of $i$. We assume the model is connected so $d_i \geq 1 \; \forall i$ (else each component may be analyzed independently), and take $x \log x = 0$ for $x=0$. Hence, the global optimum of the Bethe free energy,
\begin{align}\label{eq:defnF}
\F(q) &=\mathbb{E}_q(E)-S_B(q)\\
	 &= \sum_{(i,j)\in \ce} -\big(\W \x +S_{ij}(q_i,q_j) \big) \notag\\
	 &\qquad +\sum_{i \in \cal{V}} \big( -\h q_i + (z_i-1) S_i(q_i) \big), \notag
\end{align}
is achieved by minimizing $\F$ over the local polytope, with $Z_B$ defined s.t. the result obtained equals $-\log Z_B$. See \cite{WaiJor08} for details.

Considering the local polytope, given $q_i$ and $q_j$, we must have
\begin{equation}\label{eq:mu}
\mu_{ij}=\begin{pmatrix} 1 + \x -\s -q_j & q_j-\x \\ \s - \x & \x \end{pmatrix}
\end{equation}

for some $\x \in [0,\min(\s, q_j)]$, where $\mu_{ij}(a,b)=q(X_i=a,X_j=b)$. Let $\al=e^{\W}-1$. $\al=0 \Leftrightarrow \W=0$ may be assumed not to occur else the edge $(i,j)$ may be deleted. $\al$ has the same sign as $\W$, if positive then the edge $(i,j)$ is \emph{attractive}; if negative then the edge is \emph{repulsive}.
The MRF is attractive if all edges are attractive. As in \cite{WelTeh01}, one can solve for $\x$ explicitly in terms of $\s$ and $q_j$ by minimizing $\F$, leading to a quadratic equation with real roots,
\begin{equation}\label{eq:xi2}
\alpha_{ij}\xi_{ij}^2 - [1+\alpha_{ij}(q_i+q_j)]\xi_{ij}+(1+\alpha_{ij})q_i q_j=0.
\end{equation}
For $\al>0$, $\x(q_i,q_j)$ is the lower root, for $\al<0$ it is the higher. %When $\al=0$ (no edge relationship), this reduces as expected to $\x=q(X_i=1,X_j=1)=q(X_i=1)q(X_j=1)=q_i q_j$. 
Collecting the pairwise terms of $\F$ from \eqref{eq:defnF} for one edge, define
\begin{equation}\label{eq:f}
f_{ij}(q_i,q_j)=-\W \x(q_i, q_j) -S_{ij}(q_i,q_j).
\end{equation}
Thus we may consider the minimization of $\F$ over $q=(q_1,\dots,q_n) \in [0,1]^n$.

We are interested in \textit{discretized pseudo-marginals} where for each $q_i$, we restrict its possible values to a discrete mesh $\M_i$ of points in $[0,1]$, which may be spaced unevenly. We allow $\M_i \ne \M_j$. Write $\M$ for the entire mesh. Let $N_i=|\M_i|$ and define $N=\sum_{i \in \V} N_i$ and $\Pi=\prod_{i \in \V} N_i$, the sum and product respectively of the number of mesh points in each dimension. Let $\hat{q}$ be the location of a global optimum of $\F$. We say that a mesh construction $\M(\eps)$ is \emph{sufficient} if, given $\epsilon>0$, it can be guaranteed that $\exists$ a mesh point $q^* \in \prod_{i \in \V} \M_i$ s.t. $\F(q^*)-\F(\hat{q}) \leq \epsilon$. 

We shall make use of the standard sigmoid function, $\sigma(x)=1/(1+\exp(-x))$ for various bounds. 

\subsection{Input model specification}\label{sec:input}
Throughout this paper, we assume the reparameterization in \eqref{eq:E} for all analysis, but a different specification is more natural for input models avoiding bias. We assume an input model is given with singleton terms $\theta_i$ as in \eqref{eq:E}, but with pairwise energy terms instead given by $-\frac{W_{ij}}{2}x_i x_j -\frac{W_{ij}}{2} (1-x_i)(1-x_j)$. With this format, varying $W_{ij}$ simply alters the degree of push/pull between $i$ and $j$, without also changing the probability that each variable will be 0 or 1, as is the case with the format of \eqref{eq:E}.
We assume maximum possible values $W$ and $T$ are known with $|\h| \leq T \; \forall i \in \V$, and $|W_{ij}| \leq W \; \forall (i,j) \in \ce$. The required transformation to convert from input model to the format of \eqref{eq:E}, simply takes $\h \leftarrow \h - \sum_{j \in \N(i)} W_{ij}/2$, leaving $W_{ij}$ unaffected.

%We assume maximum possible values $W$ and $T$ are known with $|\h| \leq T \; \forall i \in \V$, and $|W_{ij}| \leq W \; \forall (i,j) \in \ce$.

\subsection{Submodularity}\label{sec:submodularity}
%Submodular functions, defined on a discrete lattice, are in some ways like convex and in some ways like concave continuous functions. 
In our context, a pairwise multi-label function on a set of ordered labels $X_{ij}=\{1,\dots,K_i\} \times \{1,\dots,K_j\}$ is \textit{submodular} iff $\forall x, y \in X_{ij}, \; f(x \wedge y) + f(x \vee y) \leq f(x) + f(y)$,  
where for $x=(x_1,x_2)$ and $y=(y_1,y_2)$, $(x \wedge y)=(\min(x_1,y_1),\min(x_2,y_2))$ and $(x \vee y)=(\max(x_1,y_1),\max(x_2,y_2))$. For binary variables, submodular energy is equivalent to being attractive. %See [*refs] for details.

The key property for us is that if all pairwise cost functions $f_{ij}$ over $\M_i \times \M_j$ from \eqref{eq:f} are submodular, 
then the global discretized optimum may be found efficiently %as a multi-label MAP inference problem 
using graph cuts \cite{SchFla06}.

\begin{theorem}[Submodularity for any discretization of an attractive model, \A Theorem 8, \cite{Kor12}]\label{thm:submod}
If a binary pairwise MRF is submodular on an edge $(i,j)$, i.e. $W_{ij}>0$, then the multi-label discretized MRF for any mesh $\M$ is submodular for that edge. In particular, if the MRF is fully attractive, i.e. $W_{ij}>0 \; \forall (i,j) \in \cal{E}$, then the multi-label discretized MRF is fully submodular for any discretization. Proof in \A.
\end{theorem}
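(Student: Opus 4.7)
The plan is to reduce the discrete submodularity condition on an arbitrary mesh to a pointwise condition on the continuous second mixed partial of $f_{ij}$, and then verify that condition for any attractive edge. For any $C^2$ function $f:[0,1]^2\to\RR$ and any $a\le b$ in $\M_i$, $c\le d$ in $\M_j$,
$$[f(a,c)+f(b,d)]-[f(a,d)+f(b,c)]=\int_a^b\!\!\int_c^d \frac{\partial^2 f}{\partial q_i\,\partial q_j}(s,t)\,dt\,ds,$$
so a pointwise bound $\partial^2 f_{ij}/\partial q_i\,\partial q_j\le 0$ on $(0,1)^2$ immediately implies the inequality $f(x\wedge y)+f(x\vee y)\le f(x)+f(y)$ for every pair of mesh points $x,y$, by taking $a=\min(x_1,y_1)$ etc.~and noting the trivial case when one coordinate dominates. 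Hence the entire claim reduces to showing this mixed partial is nonpositive on each attractive edge, independently of the mesh.

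To establish the pointwise bound, I would differentiate the quadratic \eqref{eq:xi2} implicitly to obtain $\partial\xi/\partial q_i$ and $\partial\xi/\partial q_j$ as explicit rational functions of $\xi,q_i,q_j,\alpha_{ij}$. The entropy term $S_{ij}$ is the Shannon entropy of the joint $\mu_{ij}$ in \eqref{eq:mu}, whose four entries are affine in $(q_i,q_j,\xi)$; its partials are therefore linear combinations of $\log \mu_{ij}(a,b)$ terms. Combining via chain rule gives $\partial f_{ij}/\partial q_i=-W_{ij}\,\partial\xi/\partial q_i-\partial S_{ij}/\partial q_i$. By the envelope theorem (or directly from the first-order condition that defines $\xi$ in \eqref{eq:xi2}), the pieces of $\partial S_{ij}/\partial\xi$ and $W_{ij}$ cancel, so $\partial f_{ij}/\partial q_i$ collapses to a log-ratio of entries of $\mu_{ij}$ that does not involve $\partial\xi/\partial q_i$ explicitly.

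Differentiating this cleaner first derivative with respect to $q_j$ then requires only the remaining chain-rule dependence on $\xi(q_i,q_j)$, and after algebra I expect a simplification of the form
$$\frac{\partial^2 f_{ij}}{\partial q_i\,\partial q_j}\;=\;\frac{-\alpha_{ij}}{Q(\mu_{ij})},$$
where $Q(\mu_{ij})$ is a strictly positive polynomial in the four entries of $\mu_{ij}$ on the open square. Since $W_{ij}>0\iff\alpha_{ij}>0$, the mixed partial is then $\le 0$ on every attractive edge, which by the reduction above gives submodularity of $f_{ij}$ on $\M_i\times\M_j$ for every mesh $\M$. The second statement of the theorem follows immediately by applying the first statement edge-by-edge.

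The main obstacle is the bookkeeping in the second differentiation: two layers of implicit dependence through $\xi$ have to be tracked, and the target sign only appears after using the defining quadratic \eqref{eq:xi2} to force several cancellations. The conceptual content is routine once the envelope-theorem simplification of $\partial f_{ij}/\partial q_i$ is in hand; the risk is purely algebraic. Boundary cases where $q_i\in\{0,1\}$ or $\xi=0$ follow by continuity of $f_{ij}$ on $[0,1]^2$.
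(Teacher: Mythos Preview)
Your plan is correct and matches the paper's approach. The paper defers the proof to \cite{A}, but all the ingredients are already stated here: Theorem~\ref{thm:2nderiv} gives the closed form $\partial^2 f_{ij}/\partial q_i\partial q_j=(q_iq_j-\x)/T_{ij}$ with $T_{ij}\ge 0$, and Lemma~\ref{lem:newst} gives $\x\ge q_iq_j$ whenever $\al>0$, so the mixed partial is $\le 0$ on the open square; your integral identity then yields discrete submodularity on any mesh. You therefore need not carry out the two-layer implicit differentiation from scratch---the envelope/chain-rule computation you describe is exactly what produces Theorem~\ref{thm:2nderiv}, and your anticipated form $-\al/Q$ is recovered by using \eqref{eq:xi2} to rewrite $q_iq_j-\x=-\al(q_i-\x)(q_j-\x)$.
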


\subsection{Flipping variables}\label{sec:flip}

As in \A, we use the techniques below for flipping variables, i.e. we can consider a new model with variables $\{X_i'\}$, where $X_i'=1-X_i$ for some selection of $i$. Flipping a variable flips the parity of all its incident edges so attractive $\leftrightarrow$ repulsive. Flipping both ends of an edge leaves its parity unchanged. 

\subsubsection{Flipping all variables}\label{sec:flipall}%*Eliminate this since covered as a special case of flipping some?

Consider a new model with variables $\{X_i'=1-X_i, i=1,\dots,n\}$ and the same edges. Instead of $\theta_i$ and $W_{ij}$ parameters, let those of the new model be $\theta_i'$ and $W_{ij}'$. Identify values such that the energies of all states are maintained up to a constant\footnote{Any constant difference will be absorbed into the partition function and leave probabilities unchanged.}:
\begin{align*}
E &= -\sum_{i \in \cal{V}} \theta_i X_i - \sum_{(i,j)\in \ce} W_{ij} X_i X_j \\
&= const -\sum_{i \in \cal{V}} \theta_i' (1-X_i) - \!\! \sum_{(i,j) \in \ce} W_{ij}'(1-X_i)(1-X_j).
\end{align*}
Matching coefficients gives
\begin{equation}\label{eq:flipall}
W_{ij}'=W_{ij}, \; \theta_i'=-\theta_i-\sum_{j \in \N(i)} W_{ij}.%=-\theta_i-W_i.
\end{equation}
If the original model was attractive, so too is the new.

\subsubsection{Flipping some variables}\label{sec:flipsome}

Sometimes it is helpful to flip only a subset $\cal{R} \subseteq \cal{V}$ of the variables. This can be useful, for example, to make the model locally attractive around a variable, which can always be achieved by flipping just those neighbors to which it has a repulsive edge. Let $X_i' = 1-X_i$ if $i \in \cal{R},$ else $X_i'=X_i$ for $i \in \cal{S}$, where $\cal{S}=\cal{V} \setminus \cal{R}$. Let $\mathcal{E}_t=\{$edges with exactly $t$ ends in $\cal{R}\}$ for $t=0,1,2$. 

As in \ref{sec:flipall}, solving for $W_{ij}'$ and $\theta_i'$ such that energies are unchanged up to a constant,
\begin{align}
\label{eq:flipsome}
W_{ij}' &= \begin{cases} W_{ij} & \mspace{-1mu}(i,j) \in \mathcal{E}_0 \cup \mathcal{E}_2,\\
-W_{ij} & \mspace{-1mu} (i,j) \in \mathcal{E}_1
\end{cases} \notag \\ 
%\mspace{-4mu}
\theta_i' &= \begin{cases} \h + \sum_{(i,j) \in \mathcal{E}_1} W_{ij} & \mspace{-1mu} i \in \cal{S}, \\ -\h - \sum_{(i,j) \in %\mathcal{E}_1 \cup 
\mathcal{E}_2} W_{ij} & \mspace{-4mu} i \in \cal{R}. \end{cases}
\end{align}

\begin{lemma}\label{lem:flipBethe}
Flipping variables changes affected pseudo-marginal matrix entries' locations but not values. $\F$ is unchanged up to a constant, hence the locations of stationary points are unaffected. (Proof in \cite{A})
\end{lemma}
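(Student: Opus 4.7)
The plan is to exhibit an explicit bijection $q \mapsto q'$ on the local polytope induced by flipping, and then verify that under this bijection (i) each pseudo-marginal entry is merely relocated but takes the same numerical value, (ii) the Bethe entropy is invariant, and (iii) the expected energy changes only by the constant already built into the parameter transformation \eqref{eq:flipsome}. Concretely, if $\mathcal{R} \subseteq \V$ is the flipped set, define $q_i' = 1-q_i$ for $i \in \mathcal{R}$, $q_i' = q_i$ otherwise, and $\mu_{ij}'(a,b) = \mu_{ij}(a \oplus \ind[i\in\mathcal{R}],\, b\oplus \ind[j\in\mathcal{R}])$ where $\oplus$ denotes flipping the bit. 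One first checks that this map respects the marginalization constraints $q_i = \sum_b \mu_{ij}(\cdot,b)$, so it is a bijection from the local polytope of the original model onto that of the flipped model.

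Next, since entries of $\mu_{ij}'$ are just a relabelling of entries of $\mu_{ij}$, the pairwise entropies $S_{ij}$ are unchanged, and similarly $S_i(q_i')=S_i(q_i)$ because $S_i$ is symmetric about $1/2$. Hence $S_B(q') = S_B(q)$ term by term. This establishes the ``locations not values'' assertion and the invariance of the entropy portion of $\F$.

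For the expected-energy term, the parameters $\theta_i', W_{ij}'$ were derived in \S\ref{sec:flipsome} precisely so that $E'(x') = E(x) + C$ for some constant $C$ independent of the configuration, where $x_i' = x_i \oplus \ind[i\in\mathcal{R}]$. Because the joint pseudo-marginal entries are permuted consistently with this bit-flip, $\mathbb{E}_{q'}(E') = \mathbb{E}_q(E) + C$. Combining with the entropy invariance gives $\F'(q') = \F(q) + C$, i.e. $\F$ is shifted by a global constant. Because the map $q\mapsto q'$ is affine (indeed an involution on coordinates) with nonsingular Jacobian, $\nabla_{q'} \F'(q') = 0$ iff $\nabla_q \F(q) = 0$ under the corresponding identification, so stationary points are in one-to-one correspondence and, in particular, global minimizers transport to global minimizers.

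The main obstacle is really just bookkeeping: one has to be careful about which edges lie in $\mathcal{E}_0, \mathcal{E}_1, \mathcal{E}_2$ when verifying $E'(x') - E(x) = C$, because the constant $C$ accumulates contributions $W_{ij}$ from edges in $\mathcal{E}_1$ and $\mathcal{E}_2$ as well as $\theta_i$ terms for $i\in \mathcal{R}$; matching coefficients as in \eqref{eq:flipall}--\eqref{eq:flipsome} is routine but must be tracked precisely. Once that algebraic check is done, the rest follows from the entry-permutation observation and the symmetry of the singleton entropy.
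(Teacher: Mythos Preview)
Your proposal is correct and is precisely the natural argument. Note that the paper itself does not supply a proof of this lemma: it simply cites \cite{A}. However, the mechanism you use---that flipping a subset $\mathcal{R}$ permutes the entries of each $\mu_{ij}$ by swapping the row and/or column indexed by a flipped endpoint---is exactly what the paper invokes later in the supplement (see the proof of Lemma~\ref{lem:xilb}, equation~\eqref{eq:mup}, where flipping $X_j$ is shown to swap the columns of $\mu_{ij}$). Your decomposition into (i) entry permutation $\Rightarrow S_{ij}$ invariant, (ii) $S_i$ symmetric about $1/2$ $\Rightarrow S_i$ invariant, and (iii) $\mathbb{E}_{q'}(E')=\mathbb{E}_q(E)+C$ by the defining property of \eqref{eq:flipsome}, is the standard route and matches what \cite{A} does.
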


\subsection{Preliminary bounds}\label{sec:prelimbounds}

We use the following results from  \cite{A}.

\begin{lemma}[\A Lemma 2]\label{lem:newst}
$\al \geq 0 \Rightarrow \x \geq q_i q_j, \al \leq 0 \Rightarrow \x \leq q_i q_j$
\end{lemma}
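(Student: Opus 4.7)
The plan is to substitute $\xi = q_i q_j$ directly into the defining quadratic \eqref{eq:xi2} and analyze the sign of the resulting expression relative to the known location of $\xi_{ij}$ among the roots.

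First I would define $g(\xi) := \alpha_{ij}\xi^2 - [1+\alpha_{ij}(q_i+q_j)]\xi + (1+\alpha_{ij})q_i q_j$, so that $\xi_{ij}$ is a root of $g$. Plugging in and simplifying, the cross terms in $q_i q_j$ cancel nicely and I expect to obtain
\begin{equation*}
g(q_i q_j) = \alpha_{ij}\, q_i q_j (1-q_i)(1-q_j).
\end{equation*}
Thus $\mathrm{sign}\,g(q_i q_j) = \mathrm{sign}\,\alpha_{ij}$ whenever $q_i,q_j \in (0,1)$ (the boundary cases $q_i\in\{0,1\}$ or $q_j\in\{0,1\}$ force $\xi_{ij}=q_i q_j$ trivially from the admissible range $\xi_{ij}\in[0,\min(q_i,q_j)]$).

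Next I would locate $q_i q_j$ relative to the two roots of $g$. Note that $1+\alpha_{ij}=e^{W_{ij}}>0$, so $g(0)=(1+\alpha_{ij})q_i q_j \geq 0$. A direct computation (say assuming WLOG $q_i\leq q_j$) gives $g(q_i) = q_i(q_j-1)\leq 0$, so $g$ changes sign on $[0,\min(q_i,q_j)]$ and one root lies in this interval, consistent with $\xi_{ij}\in[0,\min(q_i,q_j)]$.

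Now I case-split on the sign of $\alpha_{ij}$.
\begin{itemize}
\item If $\alpha_{ij}>0$, the parabola $g$ opens upward and $g(q_i q_j)\geq 0$, so $q_i q_j$ lies outside the interval between the two roots. Since $g(0)>0$ and $g(\min(q_i,q_j))\leq 0$, the lower root (which is $\xi_{ij}$) must satisfy $\xi_{ij}\geq q_i q_j$.
\item If $\alpha_{ij}<0$, the parabola opens downward and $g(q_i q_j)\leq 0$, so $q_i q_j$ lies outside the two roots. Since $g(0)\geq 0$, the value $0$ lies between the roots, and therefore $q_i q_j\geq 0$ must lie above the larger root, which is $\xi_{ij}$ in this case; hence $\xi_{ij}\leq q_i q_j$.
\end{itemize}

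The only real obstacle is the algebraic simplification giving the clean factorization $\alpha_{ij}q_i q_j(1-q_i)(1-q_j)$; everything else is a routine sign-tracking argument using which root is selected in each regime, and the observation $1+\alpha_{ij}>0$.
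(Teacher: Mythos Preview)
Your proposal is correct. The algebra $g(q_iq_j)=\alpha_{ij}q_iq_j(1-q_i)(1-q_j)$ checks out, as does $g(\min(q_i,q_j))=q_i(q_j-1)\leq 0$ (WLOG $q_i\leq q_j$) and $g(0)=(1+\alpha_{ij})q_iq_j\geq 0$, and your sign-tracking against the root selection rule is sound. One small point you might make explicit in the $\alpha_{ij}>0$ case: to rule out $q_iq_j$ lying at or above the \emph{upper} root, note $q_iq_j\leq \min(q_i,q_j)$ while $g(\min(q_i,q_j))\leq 0$ forces the upper root to be at least $\min(q_i,q_j)$, so equality throughout is a boundary case.

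As for comparison with the paper: this lemma is not proved in the present paper at all; it is quoted verbatim from the companion work \cite{A} (``[A] Lemma 2'') and simply listed among the preliminary results. So there is no in-paper argument to compare against. Your self-contained quadratic argument is a perfectly good proof and is in the natural spirit one would expect for this result, given that $\xi_{ij}$ is \emph{defined} via the quadratic \eqref{eq:xi2}.
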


\begin{theorem}[\A Theorem 4]\label{thm:qsand2}
For general edge types (associative or repulsive), let $W_i=\sum_{j \in \N(i): W_{ij}>0} W_{ij}$, $V_i=-\sum_{j \in \N(i): W_{ij}<0} W_{ij}$. At any stationary point of the Bethe free energy, $\sigma(\theta_i-V_i) \leq q_i \leq \sigma(\theta_i+W_i)$. 
\end{theorem}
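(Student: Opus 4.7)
The plan is to differentiate $\F$ at a stationary point, express the edge contributions in terms of the pseudo-marginal entries $\mu_{ab}:=\mu_{ij}(a,b)$, and bound each such contribution using the edge-level optimality equation (\ref{eq:xi2}). Applying the envelope theorem to $f_{ij}$ (regarded as a function of $(q_i,q_j)$ with $\xi_{ij}$ already optimized), the $\xi_{ij}$-derivative vanishes, so only the explicit $q_i$-dependence of $-S_{ij}$ at fixed $\xi_{ij}$ contributes, yielding $\partial f_{ij}/\partial q_i = \log(\mu_{10}/\mu_{00}) =: r_{ij}$. Combined with $\partial S_i/\partial q_i = -L$, where $L:=\log(q_i/(1-q_i))$, the stationary condition $\partial\F/\partial q_i=0$ rearranges to
\begin{equation*}
\theta_i \;=\; L \;-\; \sum_{j \in \N(i)} (L - r_{ij}).
\end{equation*}

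The crux is to show that each term $L-r_{ij}$ lies between $\min(0,W_{ij})$ and $\max(0,W_{ij})$. Rewriting (\ref{eq:xi2}) multiplicatively as $\mu_{00}\mu_{11} = e^{W_{ij}}\mu_{10}\mu_{01}$ gives $\mu_{11}/\mu_{10} = e^{W_{ij}}\,\mu_{01}/\mu_{00}$. Setting $s := \mu_{01}/\mu_{00} \geq 0$ and using the identities $q_i = \mu_{10}+\mu_{11}$ and $1-q_i = \mu_{00}+\mu_{01}$, a brief calculation yields
\begin{equation*}
e^{L - r_{ij}} \;=\; \frac{q_i\, \mu_{00}}{(1-q_i)\,\mu_{10}} \;=\; \frac{1+ s\,e^{W_{ij}}}{1+s},
\end{equation*}
which is a convex combination of $1$ and $e^{W_{ij}}$ and therefore lies between them. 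Taking logarithms gives the desired $L - r_{ij} \in [\min(0,W_{ij}),\max(0,W_{ij})]$.

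Summing over $j \in \N(i)$ and recalling $W_i = \sum_{j:W_{ij}>0}W_{ij}$ and $V_i = -\sum_{j:W_{ij}<0}W_{ij}$, we obtain $\sum_j(L-r_{ij}) \in [-V_i, W_i]$. Substituting back gives $\theta_i \in [L-W_i, L+V_i]$, equivalently $\theta_i - V_i \leq L \leq \theta_i + W_i$, and monotonicity of $\sigma$ yields $\sigma(\theta_i-V_i) \leq q_i \leq \sigma(\theta_i+W_i)$.

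The main obstacle is spotting the convex-combination identity for $e^{L-r_{ij}}$: although the $\mu_{ab}$ on different edges are implicitly coupled through the joint stationarity of $\F$, the edge-level optimality of $\xi_{ij}$ reduces each per-edge contribution to a one-parameter sigmoidal bound, decoupling the estimate. Once that identity is isolated, the remainder is routine envelope-theorem bookkeeping and rearrangement of the derivative equation.
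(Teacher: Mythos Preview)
Your proof is correct. The paper does not prove Theorem~\ref{thm:qsand2} directly (it is cited from \cite{A}), but an implicit proof appears in \S\ref{sec:1st}: the derivation of \eqref{eq:bounds} bounds $\partial\F/\partial q_i$ everywhere in the Bethe box, and setting this to zero at a stationary point recovers exactly the theorem.

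The two routes start from the same derivative expression---your envelope-theorem computation of $r_{ij}=\log(\mu_{10}/\mu_{00})$ is precisely the content of the Welling--Teh formula \eqref{eq:1stderiv}, since $Q_i = \frac{(1-q_i)^{d_i-1}}{q_i^{d_i-1}}\prod_j \frac{\mu_{10}}{\mu_{00}}$ gives $\log Q_i = -(d_i-1)L + \sum_j r_{ij}$. The difference is in the bounding step. The paper uses Lemma~\ref{lem:newst} (the sign of $\xi_{ij}-q_iq_j$) to obtain one inequality in the attractive case, then invokes the flipping machinery of \S\ref{sec:flip} twice---once to get the companion inequality, and again to reduce the mixed-sign case to the attractive one. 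You instead use the multiplicative form $\mu_{00}\mu_{11}=e^{W_{ij}}\mu_{10}\mu_{01}$ of \eqref{eq:xi2} directly, which yields both inequalities and both edge signs in one stroke via the convex-combination identity. Your argument is thus more self-contained and avoids the flipping bookkeeping entirely; the paper's approach, on the other hand, produces the pointwise derivative bound \eqref{eq:bounds} valid away from stationary points, which is what the mesh construction in \S\ref{sec:1st} actually needs. A minor point you leave implicit: at a stationary point the optimal $\xi_{ij}$ lies strictly in the interior of its feasible interval, so all four $\mu_{ab}$ are positive and the ratios $s$ and $r_{ij}$ are well defined; this follows from the multiplicative identity itself together with $q_i,q_j\in(0,1)$.
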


For the efficiency of our overall approach, it is very desirable to tighten the bounds on locations of minima of $\F$ since this both reduces the search space and allows a lower density of discretizing points in our mesh. %Further, our new approach in \S\ref{sec:1st}  
This may be achieved efficiently by running either of the following two algorithms: Bethe bound propagation (BBP) from \cite{A}, or using the approach from \cite{MK07} which we term MK. Either method can achieve striking results quickly, though MK is our preferred method\footnote{Both BBP and MK are anytime methods that converge quickly, and can be implemented such that each iteration runs in $O(m)$ time. MK takes a little longer but can yield tighter bounds.} - it considers cavity fields around each variable and determines the range of possible beliefs after iterating LBP, starting from any initial values; since any minimum of $\F$ corresponds to a fixed point of LBP \cite{YedFreWei01}, this bounds all minima.

Let the lower bounds obtained for $q_i$ and $1-q_i$ respectively be $A_i$ and $B_i$
so that $A_i \leq q_i \leq 1-B_i$, and let the \emph{Bethe box} be the orthotope 
given by $\prod_{i \in \V} [A_i, 1-B_i]$.
Define $\eta_i=\min(A_i,B_i)$, i.e. the closest that $q_i$ can come to the extreme values of $0$ or $1$.

\begin{lemma}[Upper bound for $\x$ for an attractive edge, \A Lemma 6]\label{lem:xiub}
If $\al>0$, then 
$\x-\s q_j \leq \frac{\al m (1-M)}{1+\al}$, where $m=\min(q_i,q_j)$ and $M=\max(q_i,q_j)$.
\end{lemma}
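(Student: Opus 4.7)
The plan is to work with the defect $\delta = \x - q_iq_j$, which Lemma \ref{lem:newst} guarantees is non-negative when $\al>0$. First I would substitute $\x = q_iq_j + \delta$ into the defining quadratic \eqref{eq:xi2}; using the identities $q_iq_j - q_i - q_j + 1 = (1-q_i)(1-q_j)$ and $q_i + q_j - 2q_iq_j = q_i(1-q_j) + q_j(1-q_i)$, the constant and linear coefficients collapse and the equation takes the clean form
\begin{equation*}
\bigl[\,1 + \al\bigl(m(1-M) + M(1-m)\bigr)\bigr]\,\delta \;=\; \al\bigl[\,mM(1-m)(1-M) + \delta^2\,\bigr],
\end{equation*}
where I write $m=\min(q_i,q_j)$ and $M=\max(q_i,q_j)$.

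The next step is to control the $\delta^2$ term. The non-negativity of the pseudo-marginal entries $q_i-\x$ and $q_j-\x$ in \eqref{eq:mu} forces $\delta \leq \min\bigl(q_i(1-q_j),\,q_j(1-q_i)\bigr) = m(1-M)$, so the quadratic term can be dominated by a linear one, $\delta^2 \leq \delta\cdot m(1-M)$. Substituting this inequality back into the identity and moving the new linear contribution to the left gives
\begin{equation*}
\bigl[\,1 + \al M(1-m)\,\bigr]\,\delta \;\leq\; \al\, mM(1-m)(1-M),
\end{equation*}
hence the intermediate bound $\delta \leq \dfrac{\al\, mM(1-m)(1-M)}{1 + \al M(1-m)}$.

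To close, I would show this intermediate bound is itself at most the target $\frac{\al m(1-M)}{1+\al}$. Cross-multiplying and dividing by $\al m(1-M)$ (the degenerate case $m(1-M)=0$ forces $\delta=0$ by the pseudo-marginal constraint above), the inequality reduces to $M(1-m)(1+\al) \leq 1 + \al M(1-m)$, which simplifies to the trivial $M(1-m) \leq 1$. The only real piece of craft is spotting the substitution $\delta = \x - q_iq_j$ that converts the quadratic into a form where the pseudo-marginal constraint $\delta \leq m(1-M)$ yields a clean linear bound; once that is in place, no harder obstacle appears, and the rest is elementary manipulation.
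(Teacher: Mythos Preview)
Your argument is correct. The substitution $\delta=\x-q_iq_j$ indeed reduces the quadratic \eqref{eq:xi2} to
\[
\bigl[1+\al\bigl(m(1-M)+M(1-m)\bigr)\bigr]\delta=\al\bigl[mM(1-m)(1-M)+\delta^{2}\bigr],
\]
the pseudo-marginal constraints give $0\le\delta\le m(1-M)$, and the remaining two inequalities follow exactly as you describe; the degenerate case $m(1-M)=0$ is handled cleanly.

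As for comparison: the present paper does not actually prove this lemma---it is quoted from \cite{A} with the proof deferred there---so there is no in-paper argument to set yours against. Your derivation is self-contained, uses only the quadratic \eqref{eq:xi2}, Lemma~\ref{lem:newst}, and the local-polytope constraints in \eqref{eq:mu}, and in fact yields the slightly sharper intermediate bound $\delta\le \al mM(1-m)(1-M)/\bigl(1+\al M(1-m)\bigr)$ before relaxing to the stated one.
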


\subsection{Derivatives of $\F$}
In \cite{WelTeh01}, first partial derivatives of the Bethe free energy are derived as
\begin{align}\label{eq:1stderiv}
\frac{\partial \F}{\partial q_i} &= -\theta_i + \log Q_i, \\
\text{where }
Q_i &= \frac{ (1-q_i)^{d_i-1} }  { q_i^{d_i-1} } 
\frac{\prod_{j \in \N(i)} (q_i-\xi_{ij})}{\prod_{j \in \N(i)} (1+\xi_{ij}-q_i-q_j)}.\notag%$$
\end{align}

\begin{theorem}[Second derivatives for each edge, \A Theorem 7]\label{thm:2nderiv}
For any edge $(i,j)$, for any $\al$,  
\begin{equation*}
\frac{\partial^2f_{ij}}{\partial q_i^2} = \frac{1}{T_{ij}} q_j(1-q_j) , \quad 
\frac{\partial^2f_{ij}}{\partial q_j^2} = \frac{1}{T_{ij}} q_i(1-q_i)
\end{equation*}
\begin{align}\label{eq:Tij}
\frac{\partial^2f_{ij}}{\partial q_i \partial q_j} &= \frac{\partial^2f_{ij}}{\partial q_j \partial q_i} = \frac{1}{T_{ij}} (\s q_j - \x), \notag\\
\text{where } T_{ij} &= \s q_j(1-\s)(1-q_j) -(\x-\s q_j)^2\\
										&\geq 0 \text{ with equality iff } q_i \text{ or } q_j \in \{0,1\} \notag. 
\end{align}
%Further $\mu_{01}\mu_{10}-\mu_{00}\mu_{11} = \s q_j - \x$ and has the sign of $-\al$. %\\Proof in Supplement.
\end{theorem}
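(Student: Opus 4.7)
I would first introduce compact notation for the four entries of $\mu_{ij}$ in \eqref{eq:mu}: $a=1+\xi_{ij}-q_i-q_j$, $b=q_j-\xi_{ij}$, $c=q_i-\xi_{ij}$, $d=\xi_{ij}$, which satisfy $a+b+c+d=1$, and verify by direct expansion the key identity $ad-bc=\xi_{ij}-q_iq_j$. Because $\xi_{ij}$ minimizes $f_{ij}$ in $\xi$ for fixed $q_i,q_j$, the stationarity condition $\partial f_{ij}/\partial\xi=-W_{ij}+\log(ad/bc)=0$ becomes $ad=(1+\alpha_{ij})bc$, equivalently $\alpha_{ij}bc=ad-bc=\xi_{ij}-q_iq_j$. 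This is the workhorse identity: all subsequent computation is polynomial manipulation in $\{a,b,c,d\}$ under the two constraints $a+b+c+d=1$ and $ad=(1+\alpha_{ij})bc$.

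Next I would invoke the envelope theorem: the $\partial\xi_{ij}/\partial q_i$ contributions in $\partial f_{ij}/\partial q_i$ collect as $\xi_i\bigl(-W_{ij}+\log(ad/bc)\bigr)=0$, leaving $\partial f_{ij}/\partial q_i=\log(c/a)$ and $\partial f_{ij}/\partial q_j=\log(b/a)$. Differentiating once more, using $\partial c/\partial q_i=1-\xi_i$ and $\partial a/\partial q_i=\xi_i-1$, I would obtain the intermediate expressions $\partial^2 f_{ij}/\partial q_i^2 = (1-\xi_i)(1-q_j)/(ac)$ and $\partial^2 f_{ij}/\partial q_i\partial q_j = \bigl(c-(1-q_j)\xi_j\bigr)/(ac)$, reducing the theorem to finding tractable expressions for $\xi_i:=\partial\xi_{ij}/\partial q_i$ and $\xi_j:=\partial\xi_{ij}/\partial q_j$. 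This reduction is mechanical; the main obstacle is the algebraic simplification of $\xi_i$.

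For that step I would implicitly differentiate $ad=(1+\alpha_{ij})bc$ in $q_i$ and substitute $(1+\alpha_{ij})=ad/(bc)$ to remove $\alpha_{ij}$; after grouping terms via $a+b+c+d=1$ the result collapses to $\xi_i=bd(1-q_j)/T_{ij}$, with $\xi_j=cd(1-q_i)/T_{ij}$ following by the $i\leftrightarrow j$ symmetry. The underlying polynomial identity, found by writing $(c+d)(a+b)(b+d)(a+c)=(bc-ad+d)(bc-ad+a)$ (this uses $a+b+c+d=1$) and expanding, is
\[T_{ij}=(c+d)(a+b)(b+d)(a+c)-(ad-bc)^2=abc+abd+acd+bcd,\]
which immediately gives the useful rewritings $T_{ij}=ac\,q_j+bd(1-q_j)=ab\,q_i+cd(1-q_i)$. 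From these I get $1-\xi_i=ac\,q_j/T_{ij}$, and a parallel simplification $T_{ij}-d(1-q_i)(1-q_j)=a(bc-ad)=a(q_iq_j-\xi_{ij})$ finishes the mixed partial; the $q_j^2$ formula follows from the $q_i^2$ formula by $i\leftrightarrow j$ symmetry. For the sign, $T_{ij}=abc+abd+acd+bcd\geq 0$ is a sum of nonnegative terms, and equality forces every summand to vanish; a brief case analysis on which pairs of $\{a,b,c,d\}$ can simultaneously be zero at the optimal $\xi_{ij}$ (using the defining quadratic \eqref{eq:xi2} to rule out the ``diagonal'' patterns $a=d=0$ and $b=c=0$ when $q_i,q_j\in(0,1)$) shows that equality is equivalent to $q_i\in\{0,1\}$ or $q_j\in\{0,1\}$.
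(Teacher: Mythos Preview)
Your argument is correct. The paper does not actually prove this theorem; it is imported verbatim as a preliminary result from the earlier paper \cite{A} (``Theorem 7'' there), so there is no in-paper proof to compare against. Your route---envelope theorem to get $\partial f_{ij}/\partial q_i=\log(c/a)$, implicit differentiation of the stationarity condition $ad=(1+\alpha_{ij})bc$ to obtain $\xi_i=bd(1-q_j)/T_{ij}$, and the algebraic identity $T_{ij}=abc+abd+acd+bcd$ (which you derive cleanly via $q_iq_j=(bc-ad)+d$ and $(1-q_i)(1-q_j)=(bc-ad)+a$)---is the natural one and matches what is done in \cite{WelTeh01,A}. The case analysis for $T_{ij}=0$ is also right: the only zero patterns not excluded by the marginal constraints are $a=d=0$ and $b=c=0$, and both contradict $ad=(1+\alpha_{ij})bc$ with $\alpha_{ij}\in(-1,\infty)\setminus\{0\}$ when $q_i,q_j\in(0,1)$.
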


Incorporating all singleton terms gives the following result.
\begin{theorem}[All terms of the Hessian, see \A \S 4.3 and Lemma 9]\label{thm:H}
Let $H$ be the Hessian of $\F$ for a binary pairwise model, i.e. $H_{ij} = \frac{\partial^2 \F}{\partial q_i \partial q_j}$, and $d_i$ be the degree of variable $X_i$, then
\begin{align*}
H_{ii} &= - \frac{d_i-1}{q_i (1-q_i)} + \sum_{j \in \N(i)} \frac{q_j(1-q_j)}{T_{ij}} \geq \frac{1}{q_i(1-q_i)},\\ 
H_{ij} &= \begin{cases}
	\frac{\s q_j-\x}{T_{ij}} \quad &(i,j) \in \ce \\
	0 &(i,j) \notin \ce, i\neq j.
	\end{cases}
\end{align*}
\end{theorem}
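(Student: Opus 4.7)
The plan is to decompose $\F$ using \eqref{eq:defnF} into its singleton and pairwise contributions, differentiate each piece twice, and then combine with Theorem~\ref{thm:2nderiv} for the pairwise part.

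For the off-diagonal entries, only a pairwise term $f_{ij}$ in \eqref{eq:defnF} depends on both $q_i$ and $q_j$: the energy term $-\theta_i q_i$ and the singleton entropy term $(d_i-1)S_i(q_i)$ each involve just one variable, and the pairwise term $-W_{k\ell}\xi_{k\ell}-S_{k\ell}$ for any other edge $(k,\ell)$ involves neither $q_i$ nor $q_j$ when $(k,\ell)\neq(i,j)$. So $H_{ij}=\partial^2 f_{ij}/\partial q_i\partial q_j$ when $(i,j)\in\ce$, which by Theorem~\ref{thm:2nderiv} equals $(\s q_j - \x)/T_{ij}$; and $H_{ij}=0$ when $i\neq j$ and $(i,j)\notin\ce$.

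For the diagonal, I would collect all terms in $\F$ that depend on $q_i$. The energy contribution $-\theta_i q_i$ contributes nothing to the second derivative. The singleton entropy contribution is $(d_i-1)S_i(q_i)$ with $S_i(q_i)=-q_i\log q_i-(1-q_i)\log(1-q_i)$; a direct computation gives $S_i''(q_i)=-1/[q_i(1-q_i)]$, so this yields $-(d_i-1)/[q_i(1-q_i)]$. Each incident edge $(i,j)\in\ce$ contributes $\partial^2 f_{ij}/\partial q_i^2 = q_j(1-q_j)/T_{ij}$ by Theorem~\ref{thm:2nderiv}. Summing produces the claimed expression for $H_{ii}$.

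The one substantive step is the lower bound $H_{ii}\geq 1/[q_i(1-q_i)]$. I would obtain this by showing the per-edge inequality
\[
\frac{q_j(1-q_j)}{T_{ij}} \;\geq\; \frac{1}{q_i(1-q_i)},
\]
which is equivalent to $q_i(1-q_i)q_j(1-q_j)\geq T_{ij}$. Substituting the definition $T_{ij}=\s q_j(1-\s)(1-q_j)-(\x-\s q_j)^2$ from \eqref{eq:Tij} reduces this to the obvious inequality $(\x-\s q_j)^2\geq 0$. Summing over the $d_i$ neighbors contributes at least $d_i/[q_i(1-q_i)]$, which combines with the singleton term $-(d_i-1)/[q_i(1-q_i)]$ to leave the residual $1/[q_i(1-q_i)]$.

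No step looks like a real obstacle: the off-diagonal and diagonal formulas are bookkeeping on top of Theorem~\ref{thm:2nderiv}, and the only calculation with any content is reducing the diagonal bound to $(\x-\s q_j)^2\geq 0$. The only mild care needed is to note that $T_{ij}>0$ strictly in the interior so dividing is legitimate (Theorem~\ref{thm:2nderiv} gives $T_{ij}\geq 0$ with equality only at the boundary $q_i,q_j\in\{0,1\}$, where the Hessian is not needed for our optimization since stationary points lie in the interior by Theorem~\ref{thm:qsand2}).
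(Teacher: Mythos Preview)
Your proposal is correct. The paper does not supply its own proof of this theorem but refers the reader to \cite{A} (\S4.3 and Lemma~9 there); your argument---splitting $\F$ into singleton and edge pieces, reading the edge second derivatives off Theorem~\ref{thm:2nderiv}, and reducing the diagonal lower bound to $(\xi_{ij}-q_iq_j)^2\ge 0$ via the definition of $T_{ij}$---is exactly the natural route and matches what is indicated by that citation.
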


\section{NEW APPROACH}\label{sec:1st}% FOR MESH CONSTRUCTION CONSIDERING FIRST DERIVATIVES}\label{sec:1st}

We develop a new approach to constructing a sufficient mesh $\M$ by analyzing bounds on the first derivatives of $\F$. This yields
 several attractive features:
\begin{itemize}
\setlength{\itemindent}{-1.0em}
\vspace{-.0cm} \item For attractive models, we obtain a FPTAS with worst case runtime $O(\epsilon^{-3} n^3 m^3 W^3)$ and no restriction on topology, as was required in \cite{A}.
\vspace{-.0cm} \item Our sufficient mesh is typically dramatically coarser than the earlier method of \cite{A}, leading to a much simpler subsequent MAP problem unless $\epsilon$ is very small.  Here, the sum of the number of discretizing points in each dimension, $N=O\left(\frac{n m W} {\epsilon} \right)$. For comparison, the earlier method, even after our improvements in \S\ref{sec:better2}, forms a mesh with\\ $N=O \left(\eps^{-1/2} n^{7/4} \Delta^{3/4} \exp \left[ \frac{1}{2}( W(1+ \Delta/2) + T) \right] \right)$. As an example, for the model in the experiments of \S\ref{sec:exp}, our new approach with the adaptive minsum method (see \S\ref{sec:adaptive}), yields a mesh with $N$ that is 8 orders of magnitude smaller than the earlier method.
\vspace{-.0cm} \item Our approach immediately handles a general model with both attractive and repulsive edges. Hence approximating $\log Z_B$ may be reduced to a discrete multi-label MAP inference problem. This is valuable due to the availability of many MAP techniques. We discuss this in \S \ref{sec:MAP}, where we consider when the MAP problem is tractable and examine approaches which may be tried in general.

\end{itemize}

First assume we have a model which is fully attractive around variable $X_i$, i.e. $W_{ij}>0 \; \forall j \in \N(i)$. From \eqref{eq:1stderiv} and Lemma \ref{lem:newst}, we obtain
\begin{equation}\label{eq:upbound}
\frac{\partial \F}{\partial q_i} = -\theta_i + \log Q_i
 \leq -\theta_i + \log \frac{q_i}{1-q_i}.
\end{equation}
Flip all variables (see \S \ref{sec:flipall}). Write $ ^\prime$ for the parameters of the new flipped model, which is also fully attractive, then using \eqref{eq:flipall} and \eqref{eq:upbound}, 
\begin{align*}
\frac{\partial \F'}{\partial q_i'} 
 &\leq -\theta_i' + \log \frac{q_i'}{1-q_i'}\\
% \Leftrightarrow -\frac{\partial \F}{\partial q_i} &\leq -(-\theta_i-W_i) + \log \frac{1-q_i}{q_i}\\
\Leftrightarrow -\theta_i-W_i + \log \frac{q_i}{1-q_i} &\leq \frac{\partial \F}{\partial q_i}.
\end{align*}

Combining this with \eqref{eq:upbound} yields the sandwich result
$$-\theta_i-W_i + \log \frac{q_i}{1-q_i} \leq \frac{\partial \F}{\partial q_i} \leq -\theta_i + \log \frac{q_i}{1-q_i}.$$

Now generalize to consider the case that $i$ has some neighbors $\cal{R}$ to which it is adjacent by repulsive edges. In this case, flip those nodes $\cal{R}$ (see \S \ref{sec:flipsome}) to yield a model, which we denote by $ ^{\prime\prime}$, which is fully attractive around $i$, hence we may apply the above result. By \eqref{eq:flipsome} we have $\theta_i''=\theta_i-V_i$, and using $W_i''=W_i+V_i$, we obtain that for a general model,
\begin{equation}\label{eq:bounds}
-\theta_i-W_i + \log \frac{q_i}{1-q_i} \leq \frac{\partial \F}{\partial q_i} \leq -\theta_i + V_i + \log \frac{q_i}{1-q_i}.
\end{equation}

This bounds each first derivative $\frac{\partial \F}{\partial q_i}$ within a range of width $V_i+W_i=\sum_{j \in \N(i)} |W_{ij}|$, which will be sufficient for the main theoretical result to come in \eqref{eq:Nsimple}. We take the opportunity, however, to narrow this range, thereby improving the result in practice, by using just one step of the belief propagation algorithm (BBP) of \cite{A}.

Following the derivation of BBP in the Supplement of \cite{A}, where better bounds are derived on the $q_i$ location of stationary points by taking account of $[A_j,1-B_j]$ bounds on neighbors $j \in \N(i)$, we may refine the result of \eqref{eq:bounds} to yield 
\begin{align}\label{eq:LUbounds}
 &f_i^L(q_i) \leq \frac{\partial \F}{\partial q_i} \leq f_i^U(q_i), \text{ where} \nonumber \\
f_i^L(q_i) &=-\theta_i-W_i + \log U_i + \log \frac{q_i}{1-q_i} \nonumber \\
f_i^U(q_i) &=
-\theta_i + V_i -\log L_i + \log \frac{q_i}{1-q_i}.
\end{align}
$L_i, U_i$ are each $>1$ with $\log L_i + \log U_i \leq V_i + W_i$. They are computed as $L_i = \prod_{j \in \N(i)} L_{ij}$, $U_i=\prod_{j \in \N(i)} U_{ij}$, with 
$L_{ij} = \begin{cases}
1+ \frac{\al A_j}{1+\al (1-B_i) (1-A_j)} & \text{if } W_{ij}>0 \\
1+ \frac{\al B_j} {1+\al (1-B_i) (1-B_j)} & \text{if } W_{ij}<0 \end{cases}
$ , \\
$U_{ij} = \begin{cases}
1+ \frac{\al B_j}{1+\al (1-A_i) (1-B_j)} & \text{if } W_{ij}>0 \\
1+ \frac{\al A_j} {1+\al (1-A_i) (1-A_j)} & \text{if } W_{ij}<0 \end{cases}
$.

See Figure \ref{fig:partial} for an example. We make the following observations:
\begin{itemize}
\setlength{\itemindent}{-1.0em}
\vspace{-.3cm} \item The upper bound is equal to the lower bound plus the constant $D_i=V_i+W_i-\log L_i -\log U_i \geq 0$.
\vspace{-.3cm} \item The bound curves are monotonically increasing with $q_i$, ranging from $-\infty$ to $+\infty$ as $q_i$ ranges from $0$ to $1$.
\vspace{-.3cm} \item A necessary condition to be within the Bethe box is that the upper bound is $\geq 0$ and the lower bound is $\leq 0$. % (otherwise the derivative could not equal 0). 
Hence, anywhere within the Bethe box, we must have bounded derivative, $|\frac{\partial \F}{\partial q_i}| \leq D_i$. BBP generates $\{[A_i,1-B_i]\}$ bounds by iteratively updating with $L_i, U_i$ terms. In general, however, we may have better bounds from any other method, such as MK, which lead to higher $L_i$ and $U_i$ parameters and lower $D_i$.
\end{itemize}

\begin{figure}%[h] 
     %\centering   
     \includegraphics[width=9cm]{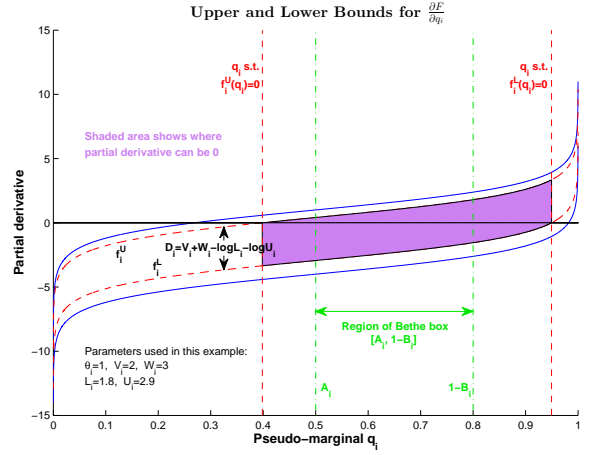} %prev using width 18
		 \caption{Upper and Lower Bounds for $\frac{\partial \F}{\partial q_i}$. Solid blue curves show worst case bounds \eqref{eq:bounds} as functions of $q_i$, and are different by a constant $V_i+W_i=\sum_{j \in \N(i)} |W_{ij}|$. Dashed red curves show the upper $f_i^U(q_i)$ and lower $f_i^L(q_i)$ bounds \eqref{eq:LUbounds} after being lowered by $\log L_i$ and raised by $\log U_i$ respectively, which incorporate the information from the bounds of neighboring variables. All bounding curves are strictly monotonic. The Bethe box region for $q_i$ must lie within the shaded region demarcated by vertical red dashed lines, but we may have better bounds available, e.g. from MK, as shown by $A_i$ and $1-B_i$.}
		 \label{fig:partial}
\end{figure}

$\F$ is continuous on $[0,1]^n$ and differentiable everywhere in $(0,1)^n$ with partial derivatives satisfying \eqref{eq:LUbounds}. $f_i^L(q_i)$ and $f_i^U(q_i)$ are continuous and integrable. Indeed, using the notation $\big[ \phi(x) \big]_{x=a}^{x=b} = \phi(b)-\phi(a)$,
\begin{equation}\label{eq:integral}
\int_a^b C + \log \frac{\s}{1-\s} dq_i = \Big[ C\s + \s \log \s + (1-\s) \log (1-\s) \Big]_{\s=a}^{\s=b}
\end{equation}
for $0 \leq a \leq b \leq 1$,
which relates to the binary entropy function $H(p)=-p \log p -(1-p) \log (1-p)$, recall the definition of $\F$. We remark that although $\frac{\partial \F}{\partial q_i}$ tends to $- \infty$ or $+ \infty$ as $q_i$ tends to $0$ or $1$, the integral converges (taking $0 \log 0 = 0$). 

Hence if $\hat{q}=(\hat{q}_1, \dots , \hat{q}_n)$ is the location of a global minimum, then for any $q=(q_1, \dots, q_n)$ in the Bethe box, 
\begin{equation}\label{eq:intbound}
\F(q)-\F(\hat{q}) \leq \!\! \sum_{i:\hat{q}_i \leq q_i} \int_{\hat{q}_i}^{q_i} \! f_i^U(q_i) dq_i + \!\! \sum_{i:q_i <\hat{q}_i} \int_{q_i}^{\hat{q}_i} \!\!\! -f_i^L(q_i) dq_i.
\end{equation}

To construct a sufficient mesh, 
a simple initial bound relies on $|\frac{\partial \F}{\partial q_i}| \leq D_i$. If mesh points $\M_i$ are chosen s.t. in dimension $i$ there must be a point $q^*$ within $\gamma_i$ of a global minimum (which can be achieved using a mesh width in each dimension of $2 \gamma_i$), then by setting  $\gamma_i=\frac{\epsilon}{nD_i}$, we obtain $\F(q^*)-\F(\hat{q}) \leq \sum_i D_i \frac{\epsilon}{nD_i} = \epsilon$. 
It is easily seen that $N_i \leq
1+\lceil \frac{1}{2 \gamma_i} \rceil$, hence the total number of mesh points, $N=\sum_{i \in \V} N_i$, satisfies
\begin{align}\label{eq:Nsimple}
N &\leq 2n+\frac{n}{2\epsilon} \sum_i D_i \leq 2n+\frac{n}{\epsilon} \sum_{(i,j) \in \ce} |W_{ij}| \notag \\
	&= O \left( \frac{n}{\epsilon} \sum_{(i,j) \in \ce} |W_{ij}| \right) = O \left( \frac{nmW}{\epsilon} \right),
\end{align}
 since $D_i \leq V_i+W_i = \sum_{j \in \N(i)} |W_{ij}|$. Here $W=\max_{(i,j) \in \ce}{|W_{ij}|}$ %, and $\sum_i d_i=2m$, where 
and $m=|\ce|$ is the number of edges.

If the initial model is fully attractive, then by Theorem \ref{thm:submod} we obtain a submodular multi-label MAP problem which is solvable using graph cuts with worst case runtime $O(N^3) = O(\eps^{-3} n^3 m^3 W^3)$ \cite{SchFla06,GrePorSeh89,Gol88}.

Note from the first expression in \eqref{eq:Nsimple} that if we have information on individual edge weights then we have a better bound using $\sum_{(i,j) \in \ce} |W_{ij}|$ rather than just  $mW$. %\cite{Gol88,Boy01,BoyKol04,SchFla06,RKAT08}. 
%, and in \S \ref{sec:discuss} provide numerical results indicating relative performance of different approaches in various contexts. 

For comparison, the earlier second derivative approach of \A has runtime $O(\epsilon^{-\frac{3}{2}} n^6 \Sigma^{\frac{3}{4}} \Omega^{\frac{3}{2}} )$, where, even using the improved method in \S \ref{sec:better2} here, $\Omega=O(\Delta e^{W(1+\Delta/2)+T})$. Unless $\epsilon$ is very small, the new first derivative approach is typically dramatically more efficient and more useful in practice. Further, it naturally handles both attractive and repulsive edge weights in the same way.
%[* High W should be easy?]

\subsection{Refinements, adaptive methods}\label{sec:refine} Since the resulting multi-label MAP inference problem is NP-hard in general \cite{Shi94}, it is  helpful to minimize its size.
As noted above, setting $\gamma_i=\frac{\epsilon}{n D_i}$, which we term the \emph{simple method}, yields a sufficient mesh, where $|\frac{\partial \F}{\partial q_i}| \leq D_i=V_i+W_i-\log L_i -\log U_i $. However, since the bounding curves are monotonic with $f_i^U \geq 0$ and $f_i^L \leq 0$, a better bound for the magnitude of the derivative is often available by setting $D_i = \max\{ f_i^U(1-B_i), -f_i^L(A_i) \}$.% (this can be no worse and will be better provided $[A_i,1-B_i]$ lies strictly within the $q_i$ points where $f_i^U(q_i)=0$ and $f_i^L(q_i)=0$).

\subsubsection{The minsum method}
We define $N_i=$ the number of mesh points in dimension $i$, with sum $N=\sum_{i \in \V} N_i$ and product $\Pi=\prod_{i \in \V} N_i$. For a fully attractive model, the resulting MAP problem may be solved in time $O(N^3)$ by graph cuts (Theorem \ref{thm:submod}, \cite{SchFla06,GrePorSeh89,Gol88}), so it is sensible to minimize $N$. In other cases, however, it is less clear what to minimize. For example, a brute force search over all points would take time $\Theta(\Pi)$.

Define the spread of possible values in dimension $i$ as $S_i=1-B_i-A_i$ and note $N_i = 1+ \lceil \frac{S_i}{2 \gamma_i} \rceil$ is required to cover the whole range. To minimize $N$ while ensuring the mesh is sufficient, consider the Lagrangian $\L= \sum_{i \in \V} \frac{S_i}{2 \gamma_i} - \lambda(\epsilon - \sum_{i \in V} \gamma_i D_i)$, where $D_i$ is set as in the simple method (\S \ref{sec:refine}). Optimizing gives
\begin{equation}
\gamma_i = \frac{\epsilon}{\sum_{j \in \V} \sqrt{S_j D_j}} \sqrt{ \frac{S_i}{D_i}}, \text{with } N \! \leq 2n + \frac{1}{2 \epsilon} \left( \sum_{i \in V} \sqrt{S_i D_i} \right)^2 
\end{equation}
which we term the \emph{minsum method}. Note $D_i \leq d_i W$ where $d_i$ is the degree of $X_i$, hence $\left( \sum_{i \in V} \sqrt{S_i D_i} \right)^2 \leq W \left( \sum_{i \in V} \sqrt{d_i} \right)^2$. By Cauchy-Schwartz and the handshake lemma, $\left( \sum_{i \in V} \sqrt{d_i} \right)^2 \leq n \sum_{i \in \V} d_i = 2mn$, with equality iff the $d_i$ are constant, i.e. the graph is regular. %For minimizing $N$, this method can be significantly better than the simple method if degrees are unequal, though if $D_i=d_i W$ then  it seems unlikely to be by more than a factor of two (which is achieved, for example, by a star graph).

If instead $\Pi$ is minimized, rather than $N$,  a similar argument shows that the simple method (\S \ref{sec:refine}) is optimal.

\subsubsection{Adaptive methods}\label{sec:adaptive}
The previous methods rely on one bound $D_i$ for $|\frac{\partial \F}{\partial q_i}|$ over the whole range $[A_i,1-B_i]$. However, we may increase efficiency by using local bounds to vary the mesh width across the range. A bound on the maximum magnitude of the derivative over any sub-range may be found by checking just $-f_i^L$ at the lower end and $f_i^U$ at the upper end.% (indeed $-f_i^L(q_i) > f_i^U(q_i)$ iff $q_i <  \bar{q_i}$ s.t. $f_i^U(\bar{q_i})=-f_i^L(\bar{q_i}))$.

This may be improved by using the exact integral as in \eqref{eq:intbound}. First, constant proportions $k_i>0$ should be chosen with $\sum_i k_i=1$.  
Next, the first (lowest) mesh point $\m_1^i \in \M_i$ should be set s.t. $\int_{A_i}^{\m_1^i} f_i^U(q_i) dq_i = k_i \eps$. This will ensure that $\m_1^i$ covers all points to its left in the sense that $\F[q_i=\m_1^i] - \F[q_i \in [A_i,\m_1^i]] \leq k_i \eps$ where all other variables $q_j, j \neq i$, are held constant at any values within the Bethe box. $\m_1^i$ also covers all points to its right up to what we term its \emph{reach}, i.e. the point $r_1^i$ s.t. $\int_{\m_1^i}^{r_1^i} -f_i^L(q_i) dq_i = k_i \eps$. Next, $\m_2^i$ is chosen as before, using $r_1^i$ as the left extreme rather than $A_i$, and so on, until the final mesh point is computed with reach $\geq 1-B_i$. This yields an optimal mesh for the choice of $\{k_i\}$. %[*draw this as an algorithm box? That will take more room?]

If $k_i=\frac{1}{n}$, we achieve an optimized \emph{adaptive simple} method. If $k_i=\frac{\sqrt{S_i D_i}  }{\sum_{j \in \V}\sqrt{S_j D_j} } $, we achieve an \emph{adaptive minsum} method. For many problems, this adaptive minsum method will be the most efficient.

Integrals are easily computed using \eqref{eq:integral}. To our knowledge, computing optimal points $\{\m_s^i\}$ is not possible analytically, but each may be found with high accuracy in just a few iterations using a search method% (even binary search is sufficient to return a point within $[1-\delta,1] \times$optimal within $O(1)$ iterations)
, hence total time to compute the mesh is $O(N)$, which is negligible compared to solving the subsequent MAP problem.

\section{REVISITING THE SECOND DERIVATIVE APPROACH}\label{sec:better2}

We review the second derivative approach used in \A (see \S 5 there). As here, the possible location of a global minimum $\hat{q}$ was first bounded in the Bethe box given by $\prod_{i \in \V} [A_i, 1-B_i]$. Next an upper bound $\Lambda$ was derived on the maximum possible eigenvalue of the Hessian $H$ of $\F$ anywhere within the Bethe box, where it was required that all edges be attractive. Then a mesh of constant width in every dimension was introduced s.t. the nearest mesh point $q^*$ to $\hat{q}$ was at most $\gamma$ away in each dimension. Hence the $\ell_2$ distance $\delta$ satisfies $\delta^2 \leq n \gamma^2$ and by Taylor's theorem, $F(q^*) \leq F(\hat{q}) + \frac{1}{2}\Lambda \delta^2.$ $\Lambda$ was computed by bounding the maximum magnitude of any element of $H$. Considering Theorem \ref{thm:H}, this involves separate analysis of diagonal $H_{ii}$ terms, which are positive and were bounded above by the term $b$; and edge $H_{ij}$ terms, which are negative for attractive edges, whose magnitude was bounded above by $a$. Then $\Omega$ was set as $\max(a,b)$, and $\Sigma$ as the proportion of non-zero entries in $H$. Finally, $\Lambda \leq \sqrt{\text{tr}(H^T H)} \leq \sqrt{\Sigma n^2 \Omega^2} = n \Omega \sqrt{\Sigma}$. 

\subsection{Improved bound for an attractive model}\label{sec:imp2}

We improve the upper bound for $\Lambda$ by improving the $a$ bound for attractive edges to derive $\tilde{a}$, a better upper bound on $-H_{ij}$. Essentially, a more careful analysis allows a potentially small term in the numerator and denominator to be canceled before bounding. Writing $\bar{\eta}=\min_{i \in \V} \eta_i (1-\eta_i)$, i.e. the closest that any dimension can come to 0 or 1, the result is that 
\begin{eqnarray}% \label{eq:newHij}
-H_{ij} &\leq&    \left( \frac{\al}{1+\al} \right) \Bigg/ \bar{\eta} \left( 1-\left( \frac{\al}{1+\al} \right)^2 \right) \\
&=& O(e^{W(1+\Delta/2)+T}). \nonumber
\end{eqnarray}
Thus,  $\tilde{a}=O(e^{W(1+\Delta/2)+T})$ which compares favorably to the earlier bound in \A, where $a=O(e^{W(1+\Delta)+2T})$. Recall $b=O(\Delta e^{W(1+\Delta/2)+T})$ and $\Omega=\max(a,b)$, so using the new $\tilde{a}$ bound, now $\Omega=O(\Delta e^{W(1+\Delta/2)+T})$. Details and derivation are in the supplement.

\subsection{Extending the second derivative approach to a general (non-attractive) model}\label{sec:2gen}

Using flipping arguments from \S \ref{sec:flip}, we are able to extend the method of \A to apply to general models. Interestingly, the theoretical bounds derived for $\Omega=\max(a,b)$ take exactly the same form as for the purely attractive case, except that now $-W \leq W_{ij} \leq W$, whereas previously it was required that $0 \leq W_{ij} \leq W$. Since it is a second derivative approach, the mesh size (measured by $N$, the total number of points summed over the dimensions) grows as $O(\epsilon^{-1/2})$ rather than as $O(\epsilon^{-1})$ in the new first derivative approach.  In practice, however, particularly for harder cases where $n$ and $W$ are above small values, unless $\epsilon$ is very small, the method of \S \ref{sec:1st} is much more efficient. Details and derivations are in the supplement.

\section{RESULTING MULTI-LABEL MAP}\label{sec:MAP}
After computing a sufficient mesh, it remains to solve the multi-label MAP inference problem on a MRF with the same topology as the initial model, where each $q_i$ takes values in $\M_i$. In general, this is NP-hard \cite{Shi94}.

\subsection{Tractable cases}
If it happens that all cost functions are submodular (as is always the case if the initial model is fully attractive by Theorem \ref{thm:submod}), then as already noted, it may be solved efficiently using graph cut methods, which rely on solving a max flow/min cut problem on a related graph, with worst case runtime $O(N^3)$ \cite{SchFla06,GrePorSeh89,Gol88}. Using the Boykov-Kolmogorov algorithm \cite{BoyKol04}, performance is typically much faster, sometimes approaching $O(N)$. This submodular setting is the only known class of problem which is solvable for any topology. 

Alternatively, the topological restriction of bounded tree-width allows tractable inference \cite{Pearl}. Further, under mild assumptions, this was shown to be the only restriction which will allow efficient inference for any cost functions \cite{Chand08}. We note that if the problem has bounded tree-width, then so too does the original binary pairwise model, hence exact inference (to yield the true marginals or the true partition function $Z$) on the original model is tractable, making our approximation result less interesting for this class. In contrast, although MAP inference is tractable for any attractive binary pairwise model, marginal inference and computing $Z$ are not \cite{JerSin93}. %to outerplanar graphs, then the binary MAP inference problem is tractable (ref), yet no efficient solution of a multi-label MAP problem is known.

A recent approach reducing MAP inference to identifying a maximum weight stable set in a derived weighted graph (\cite{Jeb13}, \cite{WelJeb13b}) shows promise, allowing efficient inference if the derived graph is perfect. Further, testing if this graph is perfect can be performed in polynomial time (\cite{Jeb13}, \cite{Chud05}).

\subsection{All other cases}\label{sec:othercases}

Many different methods are available, see \cite{Kap13} for a recent survey. 
Some, such as dual approaches, may provide a helpful bound even if the optimum is not found. Indeed, a LP relaxation will run in polynomial time and return an upper bound on $\log Z_B$ that may be useful. A lower bound may be found from any discrete point, and this may be improved using local search methods. Note also that BBP bounds $q_i \in [A_i,1-B_i]$ apply \emph{for all} the Bethe box, but for a particular value of $q_i$ say, then the BBP approach provides tighter bounds on each of its neighbors $j \in \N(i)$, which may be helpful for pruning the solution space.

\subsubsection{Persistent partial optimization approaches}\label{sec:pers} MQPBO \cite{MQPBO08} and Kovtun's method \cite{Kov03} are examples of this class. Both consider LP-relaxations and run in polynomial time. In our context, the output consists of ranges (which in the best case could be one point) of settings for some subset of the variables.  If any such ranges are returned, the strong persistence property ensures that \emph{any} MAP solution satisfies the ranges. Hence, these may be used to update $\{A_i,B_i\}$ bounds (padding the discretized range to the full continuous range covered by the end points if needed), compute a new, smaller, sufficient mesh and repeat until no improvement is obtained.

%\section{Discussion / Practical considerations}\label{sec:discuss}

%Show a table/graph of $N=\sum_i N_i$ (and $\prod_i N_i$?) using the different mesh methods for various values - which parameters are interesting to show? Perhaps fix others and vary $n,m,W,\epsilon$ separately?
\section{EXPERIMENTS}\label{sec:exp}

As a first step toward applying our algorithm to explore the usefulness of the global optimum of the Bethe approximation, here we consider one setting where LBP fails to converge, yet still we achieve reasonable results. 

We aim to predict transformer failures in a power network
\cite{RudinEtAl12}. Since the real data is sensitive, our experiments
use synthetic data.  Let $X_i \in \{0,1\}$ indicate if transformer $i$
has failed or not. Each transformer has a probability of failure on
its own which is represented by a singleton potential
$\theta_i$. However, when connected in a network, a transformer can
propagate its failure to nearby nodes (as in viral contagion) since
the edges in the network form associative dependencies. We assume that
homogeneous attractive pairwise potentials couple all transformers
that are connected by an edge, i.e. $W_{ij}=W \; \forall (i,j) \in \ce$. The
network topology creates a Markov random field specifying the
distribution $p(X_1,\ldots,X_n)$. Our goal is to compute the marginal
probability of failure of each transformer within the network (not
simply in isolation as in \cite{RudinEtAl12}). %Such marginals let us
%rank the transformers to determine which ones the utility company
%needs to inspect or repair first. 
Since recovering $p(X_i)$ is hard, we
estimate Bethe pseudo-marginals $q_i=q(X_i=1)$ through our
algorithm, which emerge as the $\argmin$ when optimizing the Bethe free energy.

A simulated sub-network of 55 connected transformers with average degree 2 was generated using a random preferential attachment model. Typical settings of $\theta_i=-2$ and $W=4$ were specified (using the input model specification of \S \ref{sec:input}). We attempted to run BP using the libDAI package \cite{libdai} but were unable to achieve convergence, even with multiple initial values, using various sequential or parallel settings and with damping. However, running our algorithm with $\epsilon=1$ achieved reasonable results as shown in Table \ref{tab:exp}, where true values were obtained with the junction tree algorithm.

\begin{table}[h]
\centering
\begin{tabular}{|l |c|}
\hline
$\eps=1$ PTAS for $\log Z_B$	& Error vs true value\\
\hline
%$\eps=1$ PTAS for $\log Z_B$ & \\
Mean $\ell_1$ error of single marginals & 0.003\\
Log-partition function & 0.26\\
\hline
\end{tabular}
\caption{Results on simulated power network}
\label{tab:exp}
\end{table}

General folklore has suggested that the Bethe approximation is poor when BP fails to converge, thus this initial result suggests further work, which is now feasible using our algorithm.

\section{DISCUSSION \& FUTURE WORK}\label{sec:conc}

To our knowledge, we have derived the first $\eps$-approximation algorithm for $\log Z_B$ for a general binary pairwise model. The approach is useful in practice, and much more efficient than the previous method of \cite{A}, though can take a long time to run for large, densely connected problems or when coupling is high. From experiments run, we note that the $\epsilon$ bounds appear to be close to tight since we have found models where the  optimum returned when run with $\eps=1$ is more than $0.5$ different to that for $\eps=0.1$. When applied to attractive models, we guarantee a FPTAS with no degree restriction. 

Future work includes further improving the efficiency of the mesh, considering how it should be selected to simplify the subsequent discrete optimization problem, and exploring applications. Interesting avenues include using it as a subroutine in a dual decomposition approach to optimize over a tighter relaxation of the marginal polytope, and it provides the opportunity to examine rigorously the performance of other Bethe approaches that typically run more quickly, such as LBP or CCCP \cite{Yui02}, against the true Bethe global optimum. 

\subsubsection*{Acknowledgments}

We are grateful to Kui Tang for help with coding, and to David Sontag, Kui Tang, Nicholas Ruozzi and Toma\u{z} Slivnik for helpful discussions. This material is based upon work supported by the National Science Foundation under Grant No. 1117631.

%\newpage

\bibliographystyle{mlapa}
%\bibliography{aiReferences}
\bibliography{nb-arXiv}

\clearpage
\newpage

\onecolumn

\section*{APPENDIX: SUPPLEMENTARY MATERIAL FOR APPROXIMATING THE BETHE PARTITION FUNCTION}

Here we provide further details and proofs of several of the results in the main paper, using the original numbering. 

\section*{\ref{sec:better2} \quad REVISITING THE SECOND DERIVATIVE APPROACH}

\subsection*{\ref{sec:imp2} \quad Improved bound for an attractive model}

In this section, we improve the upper bound for $\Lambda$ by improving the $a$ bound for attractive edges to derive $\tilde{a}$, an improved upper bound on $-H_{ij}$. Essentially, a more careful analysis allows a potentially small term in the numerator and denominator to be canceled before bounding. %Let $H$ be the Hessian of $\F$ with $H_{ij} = \frac{\partial^2 \F}{\partial \s \partial q_j}$.

%In \cite{A}, it was shown that $H_{ii}>0 \; \forall i \in \V$, and $H_{ij}<0$ for any attractive edge $(i,j)$. $\Omega$ is computed as $\max(a,b)$ where $a$ is an upper bound on the magnitude of $H_{ij}$ and $b$ is an upper bound on $H_{ii}$.  Here we refine the analysis to derive $a'$, an improved upper bound on $-H_{ij}$ for attractive edges. 
Using Theorem \ref{thm:H}, equation \eqref{eq:Tij} and Lemma \ref{lem:xiub},

\begin{align}\label{eq:Hij}
-H_{ij} &= (\x-\s q_j)\frac{1}{T_{ij}} \notag \\ 
	&\leq \frac{ m (1-M)\al}{1+\al} \frac{1}{m(1-M)\left[ (1-m)M-m(1-M)\left(\frac{\al}{1+\al}\right)^2 \right]} \nonumber \\
 &= \left( \frac{\al}{1+\al} \right) \frac{1}{(1-m)M-m(1-M)\left(\frac{\al}{1+\al}\right)^2}
\end{align}
where $m=\min(\s, q_j), M=\max(\s, q_j)$. Now we use the following result.

\begin{lemma}\label{lem:HijDenom}
%Anywhere in the Bethe box, if $k \in (0,1)$, then $\frac{1}{(1-m)M-m(1-M)k} = O()$. More specifically, 
For any $k \in (0,1)$, let $y=\min_{q_i \in [A_i,1-B_i], q_j \in [A_j, 1-Bj]} (1-m)M-m(1-M)k$, then %a lower bound for $(1-m)M-m(1-M)k$ may be computed as follows:
\begin{equation*}%\label{denomk}
y = \begin{cases}
B_iA_j - (1-B_i)(1-A_j)k \quad &\textnormal{if } (1-B_i) \leq A_j  \qquad \qquad \qquad \text{i range} \leq \text{j range}\\
(1-k) \min\{ A_j(1-A_j), B_i(1-B_i) \} &\tn{if } A_i \leq A_j \leq 1-B_i \leq 1-B_j  \quad \text{ranges overlap, i lower}
\\
(1-k) \min\{ A_j(1-A_j), B_j(1-B_j) \} &\tn{if } A_i \leq A_j \leq 1-B_j \leq 1-B_i  \qquad \text{j range} \subseteq \text{i range}
\\
(1-k) \min\{ A_i(1-A_i), B_i(1-B_i) \} &\tn{if } A_j \leq A_i \leq 1-B_i \leq 1-B_j  \qquad \text{i range} \subseteq \text{j range}
\\
(1-k) \min\{ A_i(1-A_i), B_j(1-B_j) \} &\tn{if } A_j \leq A_i \leq 1-B_j \leq 1-B_i \quad \text{ranges overlap, j lower}
\\
B_jA_i - (1-B_j)(1-A_i)k  &\tn{if } (1-B_j) \leq A_i  \qquad \qquad \qquad \text{j range } \leq \text{i range.}  
\end{cases}
\end{equation*}
\end{lemma}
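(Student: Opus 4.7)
\textbf{Proof proposal for Lemma~\ref{lem:HijDenom}.}

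The plan is to parametrize the objective $g(q_i,q_j) = (1-m)M - m(1-M)k$ separately on each of the two triangles cut out of the box $[A_i,1-B_i]\times[A_j,1-B_j]$ by the diagonal $q_i=q_j$. On the triangle where $q_i \le q_j$ we have $m=q_i$, $M=q_j$, so
\begin{equation*}
g_1(q_i,q_j) = (1-q_i)q_j - q_i(1-q_j)k,
\end{equation*}
and on $q_i \ge q_j$ we have $m=q_j$, $M=q_i$, giving the symmetric expression $g_2(q_i,q_j) = (1-q_j)q_i - q_j(1-q_i)k$. Note that on the diagonal $q_i=q_j=q$ both reduce to $(1-k)q(1-q)$.

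The key observation is the sign of the partial derivatives. For $g_1$: $\partial_{q_i}g_1 = -q_j - (1-q_j)k < 0$ and $\partial_{q_j}g_1 = (1-q_i) + q_i k > 0$, so $g_1$ strictly decreases in $q_i$ and strictly increases in $q_j$. By symmetry, $g_2$ strictly increases in $q_i$ and strictly decreases in $q_j$. Hence on the triangle $\{q_i \le q_j\}$, the minimum of $g_1$ is pushed toward making $q_i$ as large and $q_j$ as small as possible subject to $q_i \le q_j$ and the box constraints; likewise for $g_2$ on its triangle.

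Now split into the six cases listed in the statement. In the two non-overlapping cases ($(1-B_i)\le A_j$ or $(1-B_j)\le A_i$) only one of the two triangles intersects the box, and the monotonicity forces the minimum to the corner closest to the diagonal, yielding the stated closed forms $B_iA_j - (1-B_i)(1-A_j)k$ and $B_jA_i - (1-B_j)(1-A_i)k$ respectively by direct substitution. In each of the four overlap cases, the monotonicity of $g_1$ and $g_2$ drives the minimum onto the diagonal segment $q_i=q_j=q$ where $q$ ranges over the intersection $[\max(A_i,A_j),\min(1-B_i,1-B_j)]$, since further motion off the diagonal only increases the objective on each triangle. On this segment we have $g=(1-k)q(1-q)$, and since $q(1-q)$ is strictly concave, its minimum over a closed interval is attained at one of the two endpoints; comparing the two endpoint values gives $(1-k)\min\{q(1-q) : q \in \{\text{left endpt},\text{right endpt}\}\}$, which reproduces the four listed $\min\{\cdot,\cdot\}$ expressions once the endpoints are identified from the ordering of $A_i,A_j,1-B_i,1-B_j$.

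The only real obstacle is bookkeeping: one must verify that the monotonicity argument does indeed push the minimum onto the diagonal in every overlap configuration (rather than onto one of the four outer edges of the box). This follows because whenever $q_j$ is fixed with $q_j \ge A_i$, the constraint $q_i \le q_j$ combined with $\partial_{q_i}g_1<0$ forces the $q_i$-minimizer to $\min(q_j,1-B_i)$; in the overlap cases $q_j \le 1-B_i$ is achievable, so the minimizer lies on $q_i=q_j$, and the symmetric argument handles $g_2$. The remaining computations are the routine endpoint evaluations sketched above.
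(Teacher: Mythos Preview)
Your proposal is correct and follows essentially the same argument as the paper. The paper's proof is terser: it observes directly that $g$, viewed as a function of $m$ and $M$, is increasing in $M$ and decreasing in $m$ (your partial-derivative computations are exactly this, unwound in the $(q_i,q_j)$ coordinates on each triangle), hence the minimum is achieved by maximizing the smaller and minimizing the larger; this gives the corner in the disjoint cases and forces $m=M$ in the overlap cases, after which the same concavity-of-$q(1-q)$ argument you give finishes the job. Your triangle decomposition is a valid but slightly more laborious way of reaching the same conclusion.
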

\begin{proof}
The minimum is achieved by minimizing the larger and maximizing the smaller of $\s$ and $q_j$. The result follows for cases where their ranges are disjoint. If ranges overlap, then the minimum is achieved at some $q_i=q_j$ in the overlap, with value $q_i(1-q_i)(1-k)$, which is concave and minimized at an extreme of the overlap range.
\end{proof} 

Lemma \ref{lem:HijDenom} is useful in practice, and should be used to compute $\tilde{a}=\max_{(i,j) \in \ce}$ of the bound above. To analyze the theoretical worst case, it is straightforward to see the corollary that $y \geq (1-k) \bar{\eta}$, where $\bar{\eta}=\min_{i \in \V} \eta_i (1-\eta_i)$. This bound can be met, for example, if all ranges coincide. Hence, from \eqref{eq:Hij}, and with the reasoning for $\frac{1}{\bar{\eta}}$ from \A \S 5.3, where it is shown that $\frac{1}{\eta_i (1-\eta_i)} = O(e^{ T + \Delta W/2})$, and using $\al=e^{W_{ij}}-1$, we obtain
\begin{equation}\label{eq:newHij}
-H_{ij} \leq    \left( \frac{\al}{1+\al} \right) \Bigg/ \bar{\eta} \left( 1-\left( \frac{\al}{1+\al} \right)^2 \right) = O(e^{W(1+\Delta/2)+T}).
\end{equation}
Thus,  $\tilde{a}=O(e^{W(1+\Delta/2)+T})$ which compares favorably to the earlier bound in \A, where $a=O(e^{W(1+\Delta)+2T})$. Recall $b=O(\Delta e^{W(1+\Delta/2)+T})$ and $\Omega=\max(a,b)$, so using the new $\tilde{a}$ bound, now $\Omega=O(\Delta e^{W(1+\Delta/2)+T})$. %Note that $b$ is a bound on $H_{ii}$, which involves the sum of up to $\Delta$ terms, each of which has a similar upper bound to the new bound for $-H_{ij}$. %[Elaborate further?]

\subsection*{\ref{sec:2gen} \quad Extending the second derivative approach to a general (non-attractive) model}

Here we extend the analysis of \A by considering repulsive edges to show that for a general binary pairwise model, %, with modified $a$ and $b$ parameters, 
we can still calculate useful bounds (which turn out to be very similar to the earlier bounds for attractive models) for a sufficient mesh width.

Our main tool for dealing with a repulsive edge is to flip the variable at one end (see \S \ref{sec:flip}) to yield an attractive edge, then we can apply earlier results. We denote the flipped model parameters with a $^\prime$. For example, if just variable $X_j$ is flipped, then $q_j'=q(X_j'=1)=q(1-X_j=1)=1-q_j$. Since $\al=e^{W_{ij}}-1$ and here $W_{ij}'=-W_{ij}$, the following relationship holds if one end of an edge is flipped,% which we shall use several times:
\begin{equation}\label{eq:-al}
\frac{\al'}{1+\al'} =\frac{e^{-W_{ij}}-1}{e^{-W_{ij}}}=1-e^{W_{ij}}=-\al.
\end{equation}
Note that, for an attractive edge, $\frac{\al'}{1+\al'} \in (0,1)$, as is $-\al$ for a repulsive edge. Recall that when we flip some set of variables, by construction $\F'=\F + constant$ (see \S \ref{sec:flip}).

The Hessian terms from Theorem \ref{thm:H} still apply. Our goal is to bound the magnitude of each entry $H_{ij}$ for a general binary pairwise model, then the earlier analysis will provide the result. Whereas for a fully attractive model, we assumed a maximum edge weight $W$ with $0\leq W_{ij} \leq W$, now we assume $|W_{ij}| \leq W$.

\subsubsection*{\ref{sec:2gen}.1 \quad Edge terms} 

First consider $H_{ij}$ for an edge $(i,j) \in \ce$. If the edge is attractive, then the earlier analysis holds (it makes no difference if other edges are attractive or repulsive).  If it is repulsive, then $H_{ij}>0$. Consider a model where just $X_j$ is flipped. $H_{ij} = \frac{ \partial^2 \F} {\partial \s \partial q_j} = -\frac{ \partial^2 \F'} {\partial \s' \partial q_j'} = -H_{ij}'$. Hence using \eqref{eq:Hij} and \eqref{eq:-al}, in practice an upper bound may be computed from Lemma \ref{lem:HijDenom} using $k=-\al$ and $A_j'=B_j, B_j'=A_j$. The theoretical bound for an attractive edge from \eqref{eq:newHij} becomes $H_{ij} \leq \frac{-\al}{\bar{\eta}(1-\al^2)}$. As we should expect from the attractive case, the following result holds.
\begin{lemma}\label{lem:al2}
For a repulsive edge, $\frac{1}{1-\al^2}=O(e^{-W_{ij}})$. 
\end{lemma}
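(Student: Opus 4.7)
The plan is to simply substitute $\alpha = e^{W_{ij}} - 1$ into $1 - \alpha^2$, factor as a difference of squares, and observe that one factor is exactly $e^{W_{ij}}$ while the other is bounded. Specifically, I would write $1 - \alpha^2 = (1-\alpha)(1+\alpha)$, note that $1 + \alpha = e^{W_{ij}}$, and note that $1 - \alpha = 2 - e^{W_{ij}}$.

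For a repulsive edge we have $W_{ij} < 0$, so $e^{W_{ij}} \in (0,1)$ and therefore $2 - e^{W_{ij}} \in (1,2)$, i.e.\ this factor is bounded above and below by positive constants independent of $W_{ij}$. This gives
\[
\frac{1}{1-\alpha^2} = \frac{1}{(2-e^{W_{ij}})\, e^{W_{ij}}} = \frac{e^{-W_{ij}}}{2-e^{W_{ij}}},
\]
and since $1/(2-e^{W_{ij}}) \in (1/2, 1)$ for $W_{ij} < 0$, we conclude $\frac{1}{1-\alpha^2} = \Theta(e^{-W_{ij}}) = O(e^{-W_{ij}})$, as claimed. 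Combined with the uniform bound $|W_{ij}| \leq W$, this yields $\frac{1}{1-\alpha^2} = O(e^{W})$, which is the scale needed to plug back into the edge Hessian bound $H_{ij} \leq \frac{-\alpha}{\bar\eta(1-\alpha^2)}$ from the preceding paragraph.

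There is no real obstacle here; the entire content is an elementary algebraic identity. The only thing worth flagging is that one must check the sign of $2 - e^{W_{ij}}$ to be sure the denominator is positive (which it is, strictly, for $W_{ij} < 0$, and in fact for any $W_{ij} < \log 2$), so the expression is well-defined for all repulsive edges. This mirrors exactly the role that $\frac{1}{1+\alpha} = e^{-W_{ij}} = O(e^{-W})$ played in the attractive analysis of \cite{A}, so the subsequent reasoning bounding $-H_{ij}$ for repulsive edges will carry through in parallel with the attractive case.
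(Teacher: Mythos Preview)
Your proof is correct and essentially identical to the paper's own argument: the paper also factors $1-\alpha^2=(1-\alpha)(1+\alpha)$, substitutes $\alpha=e^{W_{ij}}-1$ (via $u=-W_{ij}$), and observes that one factor equals $e^{W_{ij}}$ while the other, $2-e^{W_{ij}}$, lies in $(1,2)$ for a repulsive edge. Your additional remarks on positivity of the denominator and the connection to the subsequent $H_{ij}$ bound are accurate and welcome.
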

\vspace{-.5cm} 
\begin{proof}
Let $u=-W_{ij}$, then $\al=e^{-u}-1$ and $\frac{1}{1-\al^2} = 
\frac{1}{(1-\al)(1+\al)}
=\frac{1}{e^{-u}(2-e^{-u})} 
=O(e^u)$.
\end{proof}
Hence, noting that we may flip any neighbors $j$ of $i$ which are adjacent via repulsive edges to obtain $\frac{1}{\eta_i (1-\eta_i)} = O(e^{T+\Delta W/2})$ as before, where now $W=\max_{(i,j)\in \ce} |W_{ij}|$, we see that for our new second derivative method, just as in the fully attractive case, $\tilde{a}=O(e^{W(1+\Delta/2)+T})$.

For comparison interest, we also show how the earlier, worse bound for an attractive edge given in \A may similarly be combined with flipping to provide a worse upper bound for $H_{ij}$ when $(i,j)$ is repulsive. See \A \S 5.2: considering the proof of Lemma 10 and using \eqref{eq:-al} from this paper, we see that for a repulsive edge, the $K_{ij}$ minimum bound for $T_{ij}$ becomes $K_{ij}=\eta_i \eta_j (1-\eta_i)(1-\eta_j)(1-\al^2)$; then from \A Theorem 11, the equivalent bound is $H_{ij} \leq \frac{ -\al} {4 K_{ij}}$ which gives  $a=O(e^{W(1+\Delta)+2T})$ as it was for the fully attractive case.

\smallskip
We provide a further interesting result, deriving a lower bound for $\x$ for a repulsive edge. 
%Consider Theorem \ref{thm:2nderiv} and equation \eqref{eq:Tij}. For attractive edges, $\x \geq \s q_j$ (Lemma \ref{lem:newst}) and an upper bound is provided by Lemma \ref{lem:xiub}. For a repulsive edge, where $\x \leq \s q_j$, our approach is to flip one of the variables (see \S \ref{sec:flip}) to yield an associative edge, on which we can use the earlier results, and thence derive an upper bound on $\s q_j - \x$. Recall that $\al=e^{W_{ij}}-1 \in (-1,0)$ for a repulsive edge.
\begin{lemma}[Lower bound for $\x$ for a repulsive edge, analogue of Lemma \ref{lem:xiub}]\label{lem:xilb}For any repulsive edge $(i,j)$,\\ $\s q_j - \x \leq -\al p_{ij}$ where $p_{ij} = \min\{q_i q_j, (1-q_i)(1-q_j)\}$.
\end{lemma}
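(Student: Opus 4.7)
The plan is to reduce the repulsive case to the (already known) attractive case via the flipping machinery of \S\ref{sec:flip}, then track exactly how $\xi_{ij}$, $q_i$, $q_j$ and $\alpha_{ij}$ transform.

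\textbf{Step 1 (Flip one endpoint).} Given a repulsive edge $(i,j)$, form a new model in which only $X_j$ is flipped, using \eqref{eq:flipsome} with $\mathcal{R}=\{j\}$. Then the edge $(i,j) \in \mathcal{E}_1$, so $W_{ij}' = -W_{ij} > 0$, making the edge attractive in the new model. By Lemma \ref{lem:flipBethe}, the entries of the pseudo-marginal matrix $\mu_{ij}$ are simply relabelled: inspecting \eqref{eq:mu}, the entry at position $(1,1)$ in the new model equals the entry at $(1,0)$ in the original, giving $\xi_{ij}' = q_i - \xi_{ij}$; also $q_i' = q_i$, $q_j' = 1-q_j$. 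From \eqref{eq:-al} we record $\alpha_{ij}'/(1+\alpha_{ij}') = -\alpha_{ij}$.

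\textbf{Step 2 (Apply Lemma \ref{lem:xiub}).} Because the flipped edge is attractive, Lemma \ref{lem:xiub} yields
\begin{equation*}
\xi_{ij}' - q_i' q_j' \;\leq\; \frac{\alpha_{ij}'}{1+\alpha_{ij}'}\, m'(1-M'),
\end{equation*}
where $m' = \min(q_i, 1-q_j)$ and $M' = \max(q_i, 1-q_j)$. Substituting the identities from Step 1, the left-hand side simplifies to $(q_i - \xi_{ij}) - q_i(1-q_j) = q_iq_j - \xi_{ij}$, and the multiplier on the right becomes $-\alpha_{ij}$. Hence
\begin{equation*}
q_i q_j - \xi_{ij} \;\leq\; -\alpha_{ij}\, m'(1-M').
\end{equation*}

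\textbf{Step 3 (Identify $m'(1-M')$ with $p_{ij}$).} It remains to verify that $\min(q_i,1-q_j)\cdot\min(1-q_i,q_j) = p_{ij}$. Split on the sign of $q_i+q_j-1$. If $q_i+q_j\leq 1$, then $\min(q_i,1-q_j)=q_i$ and $\min(1-q_i,q_j)=q_j$, so the product is $q_iq_j$; also $(1-q_i)(1-q_j)-q_iq_j = 1-q_i-q_j \geq 0$, so $p_{ij}=q_iq_j$. If $q_i+q_j\geq 1$, an analogous check gives $\min(q_i,1-q_j)=1-q_j$, $\min(1-q_i,q_j)=1-q_i$, product $(1-q_i)(1-q_j)$, and $p_{ij}=(1-q_i)(1-q_j)$. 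In both cases $m'(1-M')=p_{ij}$, completing the proof.

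The only step that requires any care is Step 1: correctly computing $\xi_{ij}'$ in terms of the original pseudo-marginal. Once one has the dictionary $(q_i',q_j',\xi_{ij}',\alpha_{ij}'/(1+\alpha_{ij}')) = (q_i, 1-q_j, q_i-\xi_{ij}, -\alpha_{ij})$, the rest is algebra and a two-case argument, so the result drops out as the natural repulsive analogue of Lemma \ref{lem:xiub}.
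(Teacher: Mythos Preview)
Your argument is correct and follows essentially the same route as the paper: flip $X_j$ to make the edge attractive, read off $\xi_{ij}'=q_i-\xi_{ij}$ from the column-swap of $\mu_{ij}$, apply Lemma~\ref{lem:xiub} with $\alpha_{ij}'/(1+\alpha_{ij}')=-\alpha_{ij}$, and then identify $m'(1-M')$ with $p_{ij}$ via the case split on the sign of $q_i+q_j-1$. The paper's proof is identical in structure, only abbreviating your Step~3 to the single observation $q_i\leq 1-q_j \Leftrightarrow q_i+q_j\leq 1 \Leftrightarrow q_iq_j\leq(1-q_i)(1-q_j)$.
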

\begin{proof}
Consider a model where just variable $X_j$ is flipped, and let all new quantities be designated by the symbol $^\prime$%', e.g. $q_j'=q(X_j'=1)=q(1-X_j=1)=1-q_j$
. Consider the joint pseudo-marginal \eqref{eq:mu}. In the new model the columns are switched since $\mu_{ij}'(a,b)=q(X_i'=a,X_j'=b)
%=q(X_i=a, 1-X_j=b)
=q(X_i=a,X_j=1-b)=\mu_{ij}(a,1-b)$, hence
\begin{align}\label{eq:mup}
\mu_{ij}' &=\begin{pmatrix} 1 + \x' -\s' -q_j' & q_j'-\x' \\ \s' - \x' & \x' \end{pmatrix}  
	\:\: = \begin{pmatrix} q_j-\x & 1 + \x -\s -q_j  \\ \x & \s - \x \end{pmatrix}.
\end{align}
Applying Lemma \ref{lem:xiub} to the new model, $\x'-\s' q_j' \leq \frac{\al'}{1+\al'}m'(1-M')$. Substituting in $\x'=\s-\x$ from \eqref{eq:mup} and using \eqref{eq:-al},  %noting that $W_{ij}'=-W_{ij}$ so $\frac{\al'}{1+\al'} =\frac{e^{-W_{ij}}-1}{e^{-W_{ij}}}=1-e^W_{ij}=-\al$, 
we have $(\s - \x) - \s (1-q_j) \leq -\al m'(1-M')$. %, hence $\s q_j -\x \leq m'(1-M')$
Since $m'=\min\{\s, 1-q_j\}$ and $M'=\max\{\s, 1-q_j\}$, noting $\s \leq 1-q_j \Leftrightarrow \s+q_j \leq 1 \Leftrightarrow \s q_j \leq (1-\s)(1-q_j)$, the result follows.
\end{proof}

Hence for a repulsive edge $(i,j)$, using \eqref{eq:Tij}, we have
\begin{equation*}\label{eq:TijR}
T_{ij} = \s q_j(1-\s)(1-q_j) -(\x-\s q_j)^2 \geq p_{ij}P_{ij} - \al^2 p_{ij}^2,
\end{equation*}
where  $P_{ij}=\max\{q_i q_j, (1-q_i)(1-q_j)\}$.

\subsubsection*{\ref{sec:2gen}.2 \quad Diagonal terms} 
Consider the $H_{ii}$ terms from Theorem \ref{thm:H}, which is true for a general model. If all neighbors of $X_i$ are adjacent via attractive edges, then, as in \A Theorem 11,
%\begin{equation}\label{eq:Hii}
$H_{ii} \leq \frac{1}{\eta_i(1-\eta_i)} \left( 1-d_i + \sum_{j \in \N(i)}   \frac{1} {1-\left(\frac{\al}{1+\al} \right)^2 } \right)$.

%$H_{ii} \leq \frac{1-d_i}{\eta_i(1-\eta_i)} + \sum_{j \in \N(i)} \frac{1}{ q_i (1-q_i)\Big[1-\Big(\frac{\al}{1+\al} \Big)^2 \Big] }$.
%\end{equation}
If any neighbors are connected to $X_i$ by a repulsive edge, then consider a new model where those neighbors are flipped, so now all edges incident to $X_i$ are attractive, and designate the new model parameters with a $^\prime$. As before, observe $\F=\F'+constant$, hence $H_{ii}=\frac{\partial^2 \F}{\partial q_i^2} = \frac{\partial^2 \F'}{\partial q_i'^2} = H_{ii}'$. Using \eqref{eq:-al} we obtain that for a general model,
\begin{equation}\label{eq:genHii}
H_{ii} \leq \frac{1}{\eta_i(1-\eta_i)} \left( 1-d_i + \sum_{j \in \N(i): W_{ij}>0} \frac{1}{1-\left(\frac{\al}{1+\al} \right)^2 } +\sum_{j \in \N(i): W_{ij}<0} \frac{1}{ 1-\al^2} \right).
%H_{ii} \leq \frac{1-d_i}{\eta_i(1-\eta_i)} + \sum_{j \in \N(i): W_{ij}>0} \frac{1}{ q_i (1-q_i)\Big[1-\Big(\frac{\al}{1+\al} \Big)^2 \Big] } +\sum_{j \in \N(i): W_{ij}<0} \frac{1}{ q_i (1-q_i) [1-\al^2]} .
\end{equation}
Similarly to the analysis in \S \ref{sec:2gen}.1, using Lemma \ref{lem:al2} gives that for a general model, $b=\max_{i \in \V} H_{ii} = O(\Delta e^{W(1+\Delta/2)+T})$, just as for a fully attractive model, where now $W=\max |W_{ij}|$.

\end{document}